\newcommand{\hatxi}{\hat{\xi}}
\newcommand{\hatp}{\hat{\mathcal{P}}}
\newcommand{\hatpi}{{\hat{\pi}}}
\newcommand{\muxi}{\hat{\mu}^{\xi}}
\newcommand{\muhat}{{\hat{\mu}}}
\newcommand{\sigmahat}{{\hat{\sigma}}}
\newcommand{\barp}{\bar{\mathbb{P}}_{k+l|k}^\theta}
\newcommand{\cov}{\text{cov}}
\newcommand{\tr}{\text{Tr}}
\newcommand{\cvar}{\textrm{CVaR}}
\newcommand{\drcvar}{\textrm{DR-CVaR}}
\newtheorem{theorem}{Theorem}[section]
\newtheorem{lemma}[theorem]{Lemma}
\newtheorem{proposition}[theorem]{Proposition}
\newtheorem{definition}{Definition}
\title{\LARGE \bf  Addressing Behavior Model Inaccuracies for Safe Motion Control \\ in Uncertain Dynamic Environments}
\author{Minjun Sung, Hunmin Kim, and Naira Hovakimyan
\thanks{Minjun Sung and Naira Hovakimyan are with the Department of Mechanical Science and Engineering, University of Illinois at Urbana-Champaign, USA. {\tt\small\{mjsung2, nhovakim\}@illinois.edu}. Hunmin Kim is with the Department of Electrical and Computer Engineering, Mercer University, USA. {\tt\small kim$\_$h@mercer.edu}. }
\thanks{This work was supported by NASA ULI (\#80NSSC17M0051), AFOSR (\# FA9550-21-1-0411) and NSF (\#2135925, \#2331878).}%
\thanks{Project page can be found at \href{http://sied-mpc.notion.site}{http://sied-mpc.notion.site}.}}
\begin{document}
\maketitle

\begin{abstract}
    Uncertainties in the environment and behavior model inaccuracies compromise the state estimation of a dynamic obstacle and its trajectory predictions, introducing biases in estimation and shifts in predictive distributions. Addressing these challenges is crucial to safely control an autonomous system. In this paper, we propose a novel algorithm \texttt{SIED-MPC}, which synergistically  integrates Simultaneous State and Input Estimation (SSIE) and Distributionally Robust Model Predictive Control (DR-MPC) using model confidence evaluation. The SSIE process produces unbiased state estimates and optimal \emph{input gap} estimates to assess the confidence of the behavior model, defining the ambiguity radius for DR-MPC to handle predictive distribution shifts. This systematic confidence evaluation leads to producing safe inputs with an adequate level of conservatism. Our algorithm demonstrated a reduced collision rate in autonomous driving simulations through improved state estimation, with a 54\% shorter average computation time. 
\end{abstract}

\section{Introduction}
Ensuring the safety of mobile robots and Autonomous Vehicles (AV) in uncertain environments with dynamic obstacles is a paramount challenge. While the risk of uncertainties can be reduced with accurate \emph{estimation} and \emph{prediction}, integration between these fields for safe motion control remains insufficient. Specifically, estimation methods often assume control input to be directly accessible or the controller to be known~\cite{urrea2021kalman}, while predictions propagate a behavior model through obstacle dynamics assuming a reliable estimation is given~\cite[Sec.~3]{huang2022survey}. This circular assumption suggests a gap in the integration of estimation and prediction, limiting the potential for achieving truly safe robotic and AV systems.

The central challenge in integrating estimation and prediction lies in their reliance on a behavior model of an obstacle. A flawed behavior model not only biases the state estimator but also causes shifts in the predictive distribution. Furthermore, biases introduced by the estimation process itself can further exacerbate shifts in this distribution. Consequently, behavioral modeling techniques for accurate estimation and predictions have attracted significant interest for robots and AVs. Recent advancements in machine learning, particularly in processing sequential data, have substantially improved the reliability of predictive state sequences~\cite[Sec.~5]{huang2022survey}. However, even the most advanced models encounter corner cases and residual inaccuracies, resulting in discrepancies between the predicted and actual obstacle behaviors~\cite{ding2023survey,sungrobust}. This raises the following questions: 
\begin{enumerate}
    \item How can we systematically enhance the robustness of both estimation and prediction processes against errors in behavioral modeling?
    \item What advantages could be derived from addressing these processes in a unified manner? 
\end{enumerate} 

In this work, we introduce a novel framework that synergistically combines Simultaneous State and Input Estimation (SSIE) and Distributionally Robust Model Predictive Control (DR-MPC) to address the above questions. Specifically, the SSIE process enables an unbiased estimation of an obstacle state by explicit estimation of the input gap induced by the behavior modeling error. This approach allows us to locally assess the \emph{confidence} of the behavior model, which we leverage to compute an adequate level of conservatism in producing an obstacle's distributive forecasts. The integrated controller, termed S\texttt{SIE} and confidence-based \texttt{D}istributionally Robust \texttt{MPC} (\texttt{SIED-MPC}), is presented as a computationally tractable algorithm that effectively mitigates safety concerns arising from inaccuracies in the behavior model. The efficacy of our algorithm - accurate state and input estimation, reduced collision rate, and controlled conservatism - is examined through AV traffic simulation.

\subsection{Related Works}
\subsubsection{Estimating States with Unknown Inputs}
Input and State Estimation (ISE) has attracted attention within cyber-physical systems (CPS) to simultaneously estimate the input and state of a system~\cite{yong2016unified,kim2020data,wan2019attack}. However, the implementations and implications of these works have not been extended to address input gaps or predictive distribution shifts, which is the aim of our research. Earlier, the Interacting Multiple Model (IMM) was introduced in~\cite{kaempchen2004imm} to account for policy error through a combined weighted estimate, however, it relies on heuristic policy set choices. Similar Kalman Filter (KF)-based estimation was used in~\cite{zhang2017method}, however, it relied on vehicle-to-vehicle communication.

\subsubsection{Addressing Distribution Shifts in Trajectory Prediction}
Optimization problems to address distributional uncertainty have been widely studied with convex risk measures such as Conditional Value at Risk (CVaR)~\cite{kuhn_wasserstein_2019,wiesemann2014distributionally}. Recent works have employed this approach to solve safe-motion control problems in learning-enabled environments to address learning errors~\cite{hakobyan2021wasserstein,shetty2023safeguarding}. Our work is closely aligned with~\cite{9682981}, which uses Gaussian Process Regression (GPR) to learn the obstacle behavior model. Our method generalizes this approach by being agnostic to the choice of behavior model. Unlike previous efforts, our approach neither assumes accurate obstacle states nor heuristically determines the size of an ambiguity set.

\section{Problem Description}
\subsection{Problem Formulation}
Consider the dynamics of an ego agent represented in a discrete-time nonlinear system of the form
\begin{equation}\label{eq:robot-dynamics}
    x_{k+1} = f(x_k,u_k),\quad k=0,1,2\ldots
\end{equation}
with known $x_0$, where $x_k\in\mathbb{R}^{n_x}$ and $u_k\in \mathbb{U}_k\subset \mathbb{R}^{n_u}$ are the agent state and control input at $k$-th time step. Here, $\mathbb{U}_k$ is a compact set that defines control constraints. The vector field $f$ describes the known dynamics of a controlled agent.

The dynamics of an obstacle are given as:
\begin{subequations}\label{eq:obstacle-dynamics}
\begin{align}
    \xi_k = g(\xi_{k-1}) &+ B_{k-1}d_{k-1} + \omega_{k-1},~\xi_0\sim \Xi_0,\label{subeq:obstacle-dynamics}\\
    \zeta_k &= \Phi \xi_k + \nu_k,\label{subeq:obstacle-output}\\
    d_{k-1} &= \pi(\mathcal{I}_{k-1}), \label{subeq:obstacle-policy}
\end{align}
\end{subequations}
where $\xi_k\in \mathbb{R}^{n_\xi}$ and $d_k\in\mathbb{R}^{n_d}$ are obstacle state and input, and $\zeta_k\in \mathbb{R}^{n_\zeta}$ is the output measured by the ego agent. The vector field $g$ is assumed to be continuously differentiable. The input and output matrices $B_{k-1}$ and $\Phi$ are assumed to satisfy that $\Phi B_{k-1}$ has full column rank. The policy $\pi$ is designed by its user to produce control input based on a set of arguments $\mathcal{I}$, which may involve latent information that is not observable by the ego agent (e.g., goal state).  In addition, $\omega_k\in \mathbb{R}^{n_y}$ and $\nu_k\in\mathbb{R}^{n_z}$ are assumed to be independent and identically distributed zero-mean Gaussian noise, with covariance matrices $Q \triangleq \mathbb{E}[\omega_k \omega_k^\top] \succeq 0$ and $R \triangleq \mathbb{E}[\nu_k\nu_k^\top]\succ 0$. 

While $g,B,\Phi$, and the initial distribution $\Xi_0$ are assumed to be known, the full state vector $\xi$, the input $d$, the true policy $\pi$ and the composition of $\mathcal{I}$ are unknown. An approximate behavior model $\hatpi$ is constructed based on some selected arguments $\hat{\mathcal{I}}$. The choice of the behavior model could range from Constant Turn Rate and Velocity (CTRV), Constant Velocity (CV), to more sophisticated approaches like GPR, Recurrent Neural Networks (RNNs), and transformers, depending on available data and computational resources~\cite{huang2022survey}.

The objective is to develop a safe controller ensuring
\begin{equation}\label{eq:safety-constraint}
     s(x_k,\xi_k)\leq 0~~\forall k = 0,1,\ldots,
\end{equation}
where $s:\mathbb{R}^{n_x} \times \mathbb{R}^{n_\xi}\rightarrow \mathbb{R}$ is a differentiable safety measure.

\subsection{Limitations of a Conventional Approach}
Given the behavior model $\hatpi$,  it is common practice to formulate the following MPC problem:  
\begingroup
\allowdisplaybreaks
\begin{subequations}\label{eq:MPC}
\begin{align}
    \min_{\mathfrak{u}} &\sum_{l=0}^{L-1} r(x_{k+l},u_{k+l})+p(x_{k+L})\\\
    \textbf{s.t}~&x_{k+l+1}=f(x_{k+l}, u_{k+l}),~l=0,\ldots,L-1\\
    &d^\hatpi_{k+l-1} = \hatpi(\hat{\mathcal{I}}_{k+l-1}),~l=1,\ldots,L~\label{subeq:policy}\\
    &\hatxi_{k+l}= g(\hatxi_{k+l-1}) + B_{k-1}d^\hatpi_{k+l-1},~l=1,\ldots,L\label{subeq:MPC-obstacle}\\
    &s(x_{k+l},\hatxi_{k+l})\leq 0,~l=0,\ldots,L~\label{subeq:MPC-safety-constraint}\\
    &u_{k+l}\in \mathbb{U},~l=0,\ldots, L-1,
\end{align}
\end{subequations}    
\endgroup
where $\mathfrak{u}=\{u_{k},\ldots,u_{k+L-1}\}$,  $\hatxi$ is a state estimation computed using classical filtering methods such as Extended Kalman Filter (EKF), and  $d^\hatpi$ is a control input produced using the behavior model $\hatpi$ at each time step. The functions $r:\mathbb{R}^{n_x}\times\mathbb{R}^{n_u}\rightarrow \mathbb{R}$ and $p:\mathbb{R}^{n_x}\rightarrow \mathbb{R}$ define the objective of an optimization. To produce a safe control input for the ego agent, the inputs are optimized over a horizon $L$. The first element $u_k^\star$ of the optimal control sequence $\mathfrak{u}^\star$ is applied and this process is repeated for each time step~\cite{fernandez1995model}. While employing this approach was successful in various scenarios, it inherently suffers from \textbf{two limitations}: 1) biased state estimation and 2) distribution shift in the state prediction. 

Specifically, the input gap, represented by the difference $d^{\hat{\pi}}_{k-1} - d_{k-1}$, introduces a bias in the dynamical system~\eqref{eq:obstacle-dynamics}. This compromises the assumption required by the KF-based methods that their uncertainties are zero-mean Gaussian~\cite{urrea2021kalman}. On top of the this, repeated incorporation of predicted control inputs $d^\hatpi_{k+l-1}$ in each prediction step~\eqref{subeq:MPC-obstacle} compounds inaccuracies in the state prediction distribution. This distribution shift reduces the reliability of the safety constraint~\eqref{subeq:MPC-safety-constraint}, which could lead to undesirable outcomes. Fig.~\ref{fig:Problem} visualizes the biased estimation and compounding effect of inaccurate behavior model in the predictive distribution.
\begin{figure}[H]
    \centering \includegraphics[width=0.9\linewidth]{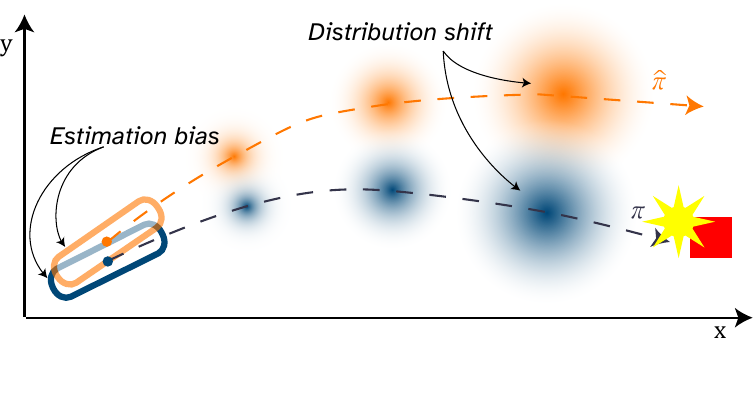}
    \vspace{-20pt}
    \caption{Illustration of estimation bias and compounding predictive distribution shift induced by the behavior gap $\hatpi - \pi$. Even small estimation error and behavior gap of an obstacle can result in large shifts in its predictive distributions. An unsafe trajectory can be predicted as safe.}
    \label{fig:Problem}
\end{figure}

To address the first limitation, we present the SSIE algorithm in Sec.~\ref{sec:ssie} that computes the unbiased estimate of the current state through optimal estimation of the previous input. Then, we formulate a DR-MPC problem in Sec.~\ref{sec:DRMPC} to overcome the second challenge of distribution shift. We evaluate the confidence of the behavior model through the SSIE process and use it to determine the size of the ambiguity set such that the conservatism is controlled according to the local accuracy of the behavior model (Sec.~\ref{subsec:model-confidence}). The integrated method is presented as an \texttt{SIED-MPC} algorithm in Sec.~\ref{subsec:SIED-MPC}. The efficacy of our method is analyzed in Sec.~\ref{sec:simulation}.

\section{Simultaneous State and Input Estimation}\label{sec:ssie}

In this section, we propose the SSIE algorithm, a modification of the ISE algorithm~\cite{yong2016unified,fang2017ensemble}, to compute the optimal estimates of the state $\xi$ and the input gap $d^\hatpi - d$ at each time step along with their covariance matrices when the output $\zeta$ is measured. This process corrects the biased state prediction induced by the inaccurate behavior model. The resulting mean state estimation is proven to be unbiased.

Denote the estimates of the true state $\xi_k$ at each time step $k$ by $\hatxi_k$, where $\hatxi_0 = \mathbb{E}_{\xi_0\sim\Xi_0}[\xi_0]$. For $k=1,2\ldots$, we linearize~\eqref{subeq:obstacle-dynamics} around $\hatxi_{k-1}$ to obtain 
\begin{align}
    \xi_k= g(\hat\xi_{k-1}) + A_{k-1}\tilde{\xi}_{k-1} + B_{k-1}d_{k-1} + \omega_{k-1} + e_{k-1},\label{eq:linearization}
\end{align}
where $\tilde{\xi}_{k-1}\triangleq \xi_{k-1}-\hatxi_{k-1}$ and $A_{k-1} \triangleq \frac{\partial g(\xi)}{\partial \xi}\big \vert_{\xi=\hatxi_{k-1}}$. Higher order errors are lumped in $e_{k-1}$, which we assume to be negligible in the remainder of this work. Furthermore, we assume that $\|A_{k-1}\|\triangleq \sup_{\xi \in \mathbb{R}^{n_\xi}/\{\mathbf{0}\}}\frac{\|A_{k-1}\xi\|_2}{\|\xi\|_2}$ is bounded. 

\subsection{Input Gap Estimation}\label{subsec:input-estimate}
The true input gap $\Delta_{k-1} \triangleq d_{k-1}- d^\hatpi_{k-1}$ is estimated by $\hat{\Delta}_{k-1}$, which is modeled as 
\begin{equation}\label{eq:dhat-model}
\hat{\Delta}_{k-1} = M_k(\zeta_{k}-\hat{\zeta}_{k|k-1}), 
\end{equation}
where the \emph{temporary} output prediction is produced using the behavior model $\hatpi$ as $$\hat{\zeta}_{k|k-1} = \Phi g(\hatxi_{k-1}) + B_{k-1}d^\hatpi_{k-1},~d^\hatpi_{k-1} = \hatpi(\hat{\mathcal{I}}_{k-1}).$$ 
Expanding~\eqref{eq:dhat-model} using~\eqref{eq:linearization} and rearranging, we arrive at
\begin{equation}\label{eq:dhat-rearranged}
\begin{aligned}
    \hat{\Delta}_{k-1} - M_k\Phi B_{k-1} \Delta_{k-1}= M_k \breve{p}_k,
\end{aligned}
\end{equation}
where $\breve{p}_k\triangleq \Phi A_{k-1}\tilde{\xi}_{k-1} + \Phi \omega_{k-1} + \nu_{k}.$ Obtaining $M_k$ that gives the least mean squared error (MSE) is equivalent to solving the following constrained optimization problem: 
\begin{equation}\label{eq:minimization-mk}
    \min_{M_k} \text{cov} (M_k\breve{p}_k)\quad s.t.~~ M_k\Phi B_{k-1} = \mathbb{I}.
\end{equation}
\begin{proposition}\label{proposition:GMT}
    Suppose $\mathbb{E}_{\xi_{k-1} \sim \Xi_{k-1}}[\tilde{\xi}_{k-1}]=0$ holds. Then, the solution to the constrained optimization problem~\eqref{eq:minimization-mk} is given by\footnote{ The assumptions $R\succ 0$, $Q\succeq 0$, $\hat{\Sigma}^\xi_{k-1}\succeq 0$ and the full column rank of $\Phi B_{k-1}$ are used to ensure the existence of inverses in~\eqref{eq:Gauss-Markov-Theorem-solution}.}
    \begin{equation}\label{eq:Gauss-Markov-Theorem-solution}
    M_k = (B_{k-1}^\top \Phi ^\top \breve{P}_k^{-1}\Phi B_{k-1})^{-1}B_{k-1}^\top \Phi ^\top \breve{P}_k^{-1},
    \end{equation}
    where $\breve{P}_{k} = \Phi A_{k-1} \Sigma_{k-1}^\xi A_{k-1}^\top \Phi^\top  + \Phi Q \Phi ^\top + R$, $\Sigma_{k-1}^\xi \triangleq \mathbb{E}[\tilde{\xi}_{k-1} \tilde{\xi}_{k-1}^\top]$ is the covariance matrix of the state estimate and the initial value $\Sigma_0^\xi$ is given by $\cov(\xi_0)$.
\end{proposition}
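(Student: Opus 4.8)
The plan is to recognize~\eqref{eq:minimization-mk} as a weighted least-squares problem — equivalently, the construction of a best linear unbiased estimator in the spirit of the Gauss--Markov theorem — and to solve it by completion of squares in the Loewner (positive-semidefinite) order rather than by minimizing a mere scalar surrogate such as the trace. The constraint $M_k\Phi B_{k-1}=\mathbb{I}$ plays the role of an unbiasedness condition, and the candidate in~\eqref{eq:Gauss-Markov-Theorem-solution} is the associated weighted pseudoinverse with weight $\breve{P}_k$.

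First I would compute $\cov(M_k\breve{p}_k)$ explicitly. Because $M_k$ is deterministic and $\breve{p}_k$ enters linearly, $\cov(M_k\breve{p}_k)=M_k\,\cov(\breve{p}_k)\,M_k^\top$, so the real task is to verify $\cov(\breve{p}_k)=\breve{P}_k$. Writing $\breve{p}_k=\Phi A_{k-1}\tildexi_{k-1}+\Phi\omega_{k-1}+\nu_k$, the hypothesis $\mathbb{E}[\tildexi_{k-1}]=0$ together with the zero-mean assumption on $\omega_{k-1},\nu_k$ makes $\breve{p}_k$ zero-mean, so its covariance equals its second moment. The three summands are mutually independent, since $\tildexi_{k-1}$ is determined by noise up to time $k-1$ whereas $\omega_{k-1}$ and $\nu_k$ are the fresh process and measurement noises; hence all cross-covariances vanish and $\cov(\breve{p}_k)=\Phi A_{k-1}\Sigma_{k-1}^\xi A_{k-1}^\top\Phi^\top+\Phi Q\Phi^\top+R=\breve{P}_k$. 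This is the step I expect to require the most care, as it is precisely where the statistical assumptions (independence, zero mean, and the noise covariances $Q,R$ from~\eqref{subeq:obstacle-dynamics}) are consumed.

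With $\cov(M_k\breve{p}_k)=M_k\breve{P}_kM_k^\top$ and $\breve{P}_k\succ 0$ (guaranteed by $R\succ 0$ plus the positive-semidefiniteness of the remaining terms), the problem reduces to minimizing $M_k\breve{P}_kM_k^\top$ over the affine set $\{M_k:M_k\Phi B_{k-1}=\mathbb{I}\}$. Denote the proposed minimizer by $M_k^\star$ as in~\eqref{eq:Gauss-Markov-Theorem-solution}; it is well-defined because $\Phi B_{k-1}$ has full column rank and $\breve{P}_k^{-1}$ exists, and direct substitution gives $M_k^\star\Phi B_{k-1}=(B_{k-1}^\top\Phi^\top\breve{P}_k^{-1}\Phi B_{k-1})^{-1}(B_{k-1}^\top\Phi^\top\breve{P}_k^{-1}\Phi B_{k-1})=\mathbb{I}$, so it is feasible. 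To prove optimality I would write any feasible $M_k=M_k^\star+E$ with $E\triangleq M_k-M_k^\star$, noting that $E\Phi B_{k-1}=0$ since both $M_k$ and $M_k^\star$ satisfy the unbiasedness constraint. Expanding yields
\[
M_k\breve{P}_kM_k^\top=M_k^\star\breve{P}_kM_k^{\star\top}+M_k^\star\breve{P}_kE^\top+E\breve{P}_kM_k^{\star\top}+E\breve{P}_kE^\top.
\]
The crucial cancellation is that $M_k^\star\breve{P}_k=(B_{k-1}^\top\Phi^\top\breve{P}_k^{-1}\Phi B_{k-1})^{-1}B_{k-1}^\top\Phi^\top$, so $M_k^\star\breve{P}_kE^\top=(B_{k-1}^\top\Phi^\top\breve{P}_k^{-1}\Phi B_{k-1})^{-1}(E\Phi B_{k-1})^\top=0$, and symmetrically for the other cross term. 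Hence $M_k\breve{P}_kM_k^\top=M_k^\star\breve{P}_kM_k^{\star\top}+E\breve{P}_kE^\top\succeq M_k^\star\breve{P}_kM_k^{\star\top}$, since $\breve{P}_k\succ 0$ forces $E\breve{P}_kE^\top\succeq 0$, with equality iff $E=0$.

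The main obstacle is bookkeeping rather than depth: correctly identifying $\cov(\breve{p}_k)=\breve{P}_k$ via the independence and zero-mean arguments, and reading ``minimize $\cov$'' in the matrix (Loewner) sense so that completion of squares gives the strongest — and simultaneously trace-minimizing — conclusion. Once $E\Phi B_{k-1}=0$ is established, the cross-term cancellation and the final semidefinite inequality are routine.
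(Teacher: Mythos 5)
Your proof is correct, and it takes a genuinely different route from the paper at the key step. Both you and the paper begin identically: compute $\cov(M_k\breve{p}_k)=M_k\breve{P}_kM_k^\top$ (you justify the identity $\cov(\breve{p}_k)=\breve{P}_k$ more explicitly, spelling out the zero-mean hypothesis and the independence of $\tilde{\xi}_{k-1}$, $\omega_{k-1}$, $\nu_k$, which the paper leaves implicit), thereby reducing~\eqref{eq:minimization-mk} to minimizing $M_k\breve{P}_kM_k^\top$ subject to $M_k\Phi B_{k-1}=\mathbb{I}$. At that point the paper simply cites the Gauss--Markov theorem from Kailath's \emph{Linear Estimation} to read off~\eqref{eq:Gauss-Markov-Theorem-solution}, whereas you prove the optimality from first principles: you verify feasibility of the candidate $M_k^\star$, write an arbitrary feasible point as $M_k^\star+E$ with $E\Phi B_{k-1}=0$, and exploit the cancellation $M_k^\star\breve{P}_kE^\top=(B_{k-1}^\top\Phi^\top\breve{P}_k^{-1}\Phi B_{k-1})^{-1}(E\Phi B_{k-1})^\top=0$ to conclude $M_k\breve{P}_kM_k^\top=M_k^\star\breve{P}_kM_k^{\star\top}+E\breve{P}_kE^\top\succeq M_k^\star\breve{P}_kM_k^{\star\top}$. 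What your argument buys is self-containedness and precision: it makes explicit where each standing assumption is consumed ($R\succ 0$ for invertibility of $\breve{P}_k$, full column rank of $\Phi B_{k-1}$ for the outer inverse and for feasibility), it pins down the sense in which the matrix-valued objective is minimized (the Loewner order, hence also the trace or any other monotone scalarization), and it yields uniqueness of the minimizer since $\breve{P}_k\succ 0$ forces $E=0$ at equality. What the paper's citation buys is brevity, at the cost of leaving the meaning of ``$\min\,\cov$'' and the role of the rank assumption to the reader.
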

\begin{proof}
We can compute the covariance of $M_k\breve{p}_k$ as
\begingroup
\allowdisplaybreaks
\begin{equation*}
\begin{aligned}
    \cov &(M_k\breve{p}_k) \\
    &= M_k (\Phi A_{k-1} \Sigma_{k-1}^\xi A_{k-1}^\top \Phi^\top  + \Phi Q_{k-1} \Phi^\top + R)M_k^\top\\
    &= M_k \breve{P}_k M_k^\top.
\end{aligned}
\end{equation*}
\endgroup
Subsequently,~\eqref{eq:minimization-mk} is equivalently expressed as
\begin{equation*}
    \min_{M_k} M_k \breve{P}_k M_k^\top \quad s.t. ~~ M_k\Phi B_{k-1} = \mathbb{I},
\end{equation*}
the solution of which is given by the Gauss-Markov-Theorem (GMT) as~\eqref{eq:Gauss-Markov-Theorem-solution}~\cite[Sec.~3]{kailath2000linear}.
\end{proof}


\subsection{State Prediction}\label{subsec:state-prediction}
Using the optimal input gap estimate $\hat{\Delta}_{k-1}$ derived from~\eqref{eq:dhat-model}, we produce the state prediction of $\xi_k$ as
\begin{equation}\label{eq:state-prediction}
    \hatxi_{k|k-1} = g(\hatxi_{k-1})+B_{k-1}(d^\hatpi_{k-1}+\hat{\Delta}_{k-1}).
\end{equation}
To analyze the accuracy of this prediction, we use the linearized dynamics~\eqref{eq:linearization} to obtain
\begin{equation}\label{eq:xi-tilde-prediction}
\begin{aligned}
    \tilde{\xi}_{k|k-1}&\triangleq \xi_{k}-\hat{\xi}_{k|k-1} \\
    &= (\mathbb{I}-B_{k-1}M_k\Phi)A_{k-1}\tilde{\xi}_{k-1}+ \\
    &~~~~(\mathbb{I}-B_{k-1}M_k\Phi)\omega_{k-1} -B_{k-1}M_k\nu_{k}.
\end{aligned}     
\end{equation}
Notice that if $\mathbb{E}_{\xi_{k-1}\sim \Xi_{k-1}}[\tilde{\xi}_{k-1}] = 0$, we immediately have \begin{equation}\label{eq:xi-tilde-prediction-error}
    \mathbb{E}\left[\tilde{\xi}_{k|k-1}\right]= 0.
\end{equation}
This implies a crucial property of the SSIE process that the corrected mean state prediction is \emph{unbiased}. 

Likewise, the covariance matrix of the state prediction error is obtained as
\begin{align}
&\hat{\Sigma}_{k|k-1}^\xi \triangleq \mathbb{E}\left[\tilde{\xi}_{k|k-1}\tilde{\xi}_{k|k-1}^\top\right]\notag \\
&\stackrel{\eqref{eq:xi-tilde-prediction}}{=} (\mathbb{I}-B_{k-1}M_k\Phi)A_{k-1}~\hat{\Sigma}_{k-1}^\xi A_{k-1}^\top(\mathbb{I}-B_{k-1}M_k\Phi)^\top \notag\\
&~~+ (\mathbb{I}-B_{k-1}M_k\Phi)Q(\mathbb{I}-B_{k-1}M_k\Phi)^\top \label{eq:cov-xi-prior} \\
&~~+ B_{k-1}M_kRM_k^\top B_{k-1}^\top, \notag
\end{align}
where we used the independence of the random variables.

\subsection{State Estimation}\label{subsec:state-estimation}
The estimation of the current state $\hat{\xi}_k$ is modeled as
\begin{equation}\label{eq:state-estimation}
    \hat{\xi}_{k} = \hat{\xi}_{k|k-1} + L_k(\zeta_{k}-\Phi \hat{\xi}_{k|k-1}),
\end{equation}
where $L_k\in \mathbb{R}^{n_\xi\times n_\zeta}$ needs to be chosen to minimize the covariance matrix $\Sigma_{k}^\xi$. Plugging~\eqref{subeq:obstacle-output} into~\eqref{eq:state-estimation}, we obtain $\hat{\xi}_{k} = \hat{\xi}_{k|k-1} + L_k(\Phi \xi_{k} + \nu_{k}-\Phi \hat{\xi}_{k|k-1})$, which leads to 
\begin{equation}\label{eq:posterior-estimate}
    \tilde{\xi}_k = (\mathbb{I}-L_k\Phi)\tilde{\xi}_{k|k-1} - L_k\nu_k.
\end{equation}
Taking the expectation on both sides and applying~\eqref{eq:xi-tilde-prediction} gives
\begin{align*}
    &\mathbb{E}_{\xi_k\sim \Xi_k}\left[\tilde{\xi}_{k}\right] = \\
    &(\mathbb{I}-L_k\Phi)(\mathbb{I}-B_{k-1}M_k\Phi)A_{k-1}\mathbb{E}_{\xi_{k-1}\sim \Xi_{k-1}}[\tilde{\xi}_{k-1}] = 0, \notag
\end{align*}
implying \emph{unbiased} state estimation. 
\begin{proposition}
    Let the prior covariance matrix $\Sigma_{k|k-1}^{\xi}$ be defined as in~\eqref{eq:cov-xi-prior}. Then, the posterior covariance matrix $\Sigma_{k}^{\xi}\triangleq \mathbb{E}[\tilde{\xi}_k\tilde{\xi}_k^\top]$ is obtained as
\begin{align}
   &\hat{\Sigma}_{k}^{\xi}=(\mathbb{I}-L_k\Phi)\hat{\Sigma}_{k|k-1}^\xi(\mathbb{I}-L_k\Phi)^\top + L_kRL_k^\top+ \label{eq:xi-covariance-estimate}\\
    &(\mathbb{I}-L_k\Phi)B_{k-1}M_kRL_k^\top + L_kRM_k^\top B_{k-1}(\mathbb{I}-L_k\Phi)^\top.\notag
\end{align}
\end{proposition}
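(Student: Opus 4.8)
The plan is to propagate the posterior error dynamics~\eqref{eq:posterior-estimate} directly into the definition $\hat{\Sigma}_k^\xi = \mathbb{E}[\tilde{\xi}_k\tilde{\xi}_k^\top]$ and to carefully track which cross-correlations survive. Writing $\tilde{\xi}_k = (\mathbb{I}-L_k\Phi)\tilde{\xi}_{k|k-1} - L_k\nu_k$, I would form the outer product $\tilde{\xi}_k\tilde{\xi}_k^\top$ and take the expectation, which splits into four contributions: two quadratic terms and two symmetric cross terms coupling $\tilde{\xi}_{k|k-1}$ with $\nu_k$.

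The two quadratic terms are routine. The first is $(\mathbb{I}-L_k\Phi)\mathbb{E}[\tilde{\xi}_{k|k-1}\tilde{\xi}_{k|k-1}^\top](\mathbb{I}-L_k\Phi)^\top = (\mathbb{I}-L_k\Phi)\hat{\Sigma}_{k|k-1}^\xi(\mathbb{I}-L_k\Phi)^\top$ by the definition in~\eqref{eq:cov-xi-prior}, and the second is $L_k\mathbb{E}[\nu_k\nu_k^\top]L_k^\top = L_k R L_k^\top$. The substance of the proposition therefore lies entirely in the cross term $-(\mathbb{I}-L_k\Phi)\mathbb{E}[\tilde{\xi}_{k|k-1}\nu_k^\top]L_k^\top$ together with its transpose.

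The key step---and the one place where this update departs from the textbook Kalman recursion---is to observe that $\tilde{\xi}_{k|k-1}$ is \emph{not} independent of the current measurement noise $\nu_k$. Indeed, the corrected prediction~\eqref{eq:state-prediction} absorbs the input-gap estimate $\hat{\Delta}_{k-1}$, which by~\eqref{eq:dhat-model} is built from $\zeta_k$ and hence carries a $\nu_k$ component; this is made explicit by the $-B_{k-1}M_k\nu_k$ summand of the prior error~\eqref{eq:xi-tilde-prediction}. Since the remaining summands of $\tilde{\xi}_{k|k-1}$ depend only on $\tilde{\xi}_{k-1}$ and $\omega_{k-1}$, both independent of $\nu_k$, I would conclude $\mathbb{E}[\tilde{\xi}_{k|k-1}\nu_k^\top] = -B_{k-1}M_k R$. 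Substituting this into the two cross terms produces exactly $(\mathbb{I}-L_k\Phi)B_{k-1}M_kRL_k^\top$ and its transpose $L_kRM_k^\top B_{k-1}^\top(\mathbb{I}-L_k\Phi)^\top$, which assembles into~\eqref{eq:xi-covariance-estimate}.

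The main obstacle is conceptual rather than computational: recognizing the prior-measurement correlation. A reflexive appeal to the standard innovation-orthogonality argument would incorrectly discard both cross terms and collapse the result to the familiar Joseph form, so the crux is to justify that $-B_{k-1}M_k\nu_k$ is the only surviving correlated piece of $\tilde{\xi}_{k|k-1}$ and to carry its sign correctly through the two symmetric terms.
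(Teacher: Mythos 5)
Your proof is correct and takes essentially the same route as the paper: expand the posterior error recursion~\eqref{eq:posterior-estimate} into quadratic and cross terms, recognize that the input-gap correction makes $\tilde{\xi}_{k|k-1}$ correlated with $\nu_k$ through the $-B_{k-1}M_k\nu_k$ summand of~\eqref{eq:xi-tilde-prediction}, and substitute the resulting cross-correlation. In fact, you track the signs more carefully than the paper's own proof, which writes the cross terms with positive signs and states $\mathbb{E}[\tilde{\xi}_{k|k-1}\nu_k^\top]=B_{k-1}M_kR$ (omitting the minus); these two slips cancel, so the paper reaches the same correct formula~\eqref{eq:xi-covariance-estimate} that your sign-consistent derivation produces directly.
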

\begin{proof}
    Evaluating $\hat{\Sigma}^\xi_k$ using~\eqref{eq:posterior-estimate}, we get
    \begin{align}
    &\hat{\Sigma}_{k}^{\xi}=(\mathbb{I}-L_k\Phi)\hat{\Sigma}_{k|k-1}^\xi(\mathbb{I}-L_k\Phi)^\top + L_kRL_k^\top+ \label{xitilde-covariance}\\
    &(\mathbb{I}-L_k\Phi)\mathbb{E}[\tilde{\xi}_{k|k-1}\nu_k^\top]L_k^\top + L_k\mathbb{E}[\nu_k\tilde{\xi}_{k|k-1}^\top](\mathbb{I}-L_k\Phi)^\top,\notag
    \end{align}
    where $\tilde{\xi}_{k|k-1}$ is given in~\eqref{eq:xi-tilde-prediction}. Moreover, we have
    \begin{equation*}
        \mathbb{E}[\tilde{\xi}_{k|k-1}\nu_k^\top] = B_{k-1}M_k\mathbb{E}[\nu_k\nu_k^\top] = B_{k-1}M_kR,
    \end{equation*}
    where we used the fact that $\tilde{\xi}_{k-1}$ and $\omega_{k-1}$ are uncorrelated with $\nu_k$. Substituting this result (and a similar result for its transpose) into~\eqref{xitilde-covariance} gives~\eqref{eq:xi-covariance-estimate}, concluding the proof.
\end{proof}
Next, we optimize $L_k$ as the minimizer of $\tr(\hat{\Sigma}_{k}^\xi)$.
\begin{proposition}
    The solution to the unconstrained optimization
    \begin{equation*}
    \min_{L_k}\tr(\hat{\Sigma}_{k}^\xi)
    \end{equation*}
    for $\hat{\Sigma}_{k}^\xi$, defined in~\eqref{eq:xi-covariance-estimate}, is given by
    \begin{align*}
        &L_k = (\hat{\Sigma}_{k|k-1}^\xi \Phi^\top-B_{k-1}M_kR)\\
        &~~~\cdot(R+\Phi \hat{\Sigma}^\xi_{k|k-1}\Phi^\top -\Phi B_{k-1}M_k R -RM^\top_k B_{k-1}^\top \Phi^\top)^{-1}.\notag
    \end{align*}
\end{proposition}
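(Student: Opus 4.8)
The plan is to treat this as an unconstrained matrix least-squares problem and solve it through the first-order stationarity condition, exploiting the fact that $\tr(\hat{\Sigma}_{k}^\xi)$ is a convex quadratic function of $L_k$. First I would simplify the objective: in~\eqref{eq:xi-covariance-estimate} both $\hat{\Sigma}_{k|k-1}^\xi$ and $R$ are symmetric, and the last two summands are transposes of one another, so because $\tr(X)=\tr(X^\top)$ they contribute equally to the trace. This collapses the cost to $\tr[(\mathbb{I}-L_k\Phi)\hat{\Sigma}_{k|k-1}^\xi(\mathbb{I}-L_k\Phi)^\top]+\tr(L_kRL_k^\top)+2\,\tr[(\mathbb{I}-L_k\Phi)B_{k-1}M_kRL_k^\top]$, which is more convenient to differentiate.

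Next I would impose $\partial\tr(\hat{\Sigma}_{k}^\xi)/\partial L_k=0$ using the standard matrix-calculus identities $\partial\tr(L_kA)/\partial L_k=A^\top$, $\partial\tr(AL_k^\top)/\partial L_k=A$, and $\partial\tr(L_kAL_k^\top)/\partial L_k=L_k(A+A^\top)$. Expanding each trace term and repeatedly using the symmetry of $\hat{\Sigma}_{k|k-1}^\xi$ and $R$, the first term contributes $-2\hat{\Sigma}_{k|k-1}^\xi\Phi^\top+2L_k\Phi\hat{\Sigma}_{k|k-1}^\xi\Phi^\top$, the second contributes $2L_kR$, and the cross term contributes $2B_{k-1}M_kR-2L_k(\Phi B_{k-1}M_kR+RM_k^\top B_{k-1}^\top\Phi^\top)$. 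Setting the sum to zero, dividing by two, and moving every factor premultiplying $L_k$ to the left gives $L_k(R+\Phi\hat{\Sigma}_{k|k-1}^\xi\Phi^\top-\Phi B_{k-1}M_kR-RM_k^\top B_{k-1}^\top\Phi^\top)=\hat{\Sigma}_{k|k-1}^\xi\Phi^\top-B_{k-1}M_kR$, and right-multiplying by the inverse of the bracketed factor reproduces the claimed expression for $L_k$.

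The step requiring the most care is justifying that the matrix to be inverted, $R+\Phi\hat{\Sigma}_{k|k-1}^\xi\Phi^\top-\Phi B_{k-1}M_kR-RM_k^\top B_{k-1}^\top\Phi^\top$, is nonsingular and that the resulting stationary point is genuinely the global minimizer. The key observation I would use is that this same symmetric matrix is exactly the kernel of the quadratic-in-$L_k$ part of the objective (after symmetrizing the cross term), so its positive definiteness simultaneously guarantees strict convexity, hence a unique global minimum, \emph{and} the existence of the inverse. To establish definiteness I would identify it with the covariance of the innovation $\zeta_k-\Phi\hatxi_{k|k-1}=\Phi\tildexi_{k|k-1}+\nu_k$ obtained from~\eqref{eq:xi-tilde-prediction}, which is positive semidefinite as a covariance and positive definite under the standing assumption $R\succ0$ (mirroring the invertibility argument in the footnote to Proposition~\ref{proposition:GMT}). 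This closes the gap between the formal stationarity computation and the optimality claim.
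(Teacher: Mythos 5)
Your core derivation coincides with the paper's proof: both treat $\tr(\hat{\Sigma}_{k}^\xi)$ as a convex quadratic in $L_k$, impose the first-order stationarity condition, and solve the resulting linear equation; your trace simplification and the three derivative contributions reproduce the paper's condition term by term. Up to that point the argument is correct and identical in approach.

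The genuine gap is in the step you yourself flag as requiring the most care. Your identification of
\begin{equation*}
C \triangleq R+\Phi \hat{\Sigma}^\xi_{k|k-1}\Phi^\top -\Phi B_{k-1}M_k R -RM^\top_k B_{k-1}^\top \Phi^\top
\end{equation*}
with the covariance of the innovation $\zeta_{k}-\Phi \hatxi_{k|k-1}=\Phi\tildexi_{k|k-1}+\nu_k$ is correct (using $\mathbb{E}[\tildexi_{k|k-1}\nu_k^\top]=-B_{k-1}M_kR$), and so is the observation that $C$ is the kernel of the quadratic part of the objective. But the conclusion $C\succ 0$ is false, and no argument can repair it, because $C$ is structurally singular. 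Since $M_k$ from~\eqref{eq:Gauss-Markov-Theorem-solution} satisfies the constraint $M_k\Phi B_{k-1}=\mathbb{I}$ of~\eqref{eq:minimization-mk}, the matrix $N\triangleq \mathbb{I}-\Phi B_{k-1}M_k$ annihilates the $n_d$-dimensional range of $\Phi B_{k-1}$, so $\operatorname{rank}(N)\leq n_\zeta-n_d$. Substituting~\eqref{eq:cov-xi-prior} into $C$ and using $\Phi(\mathbb{I}-B_{k-1}M_k\Phi)=N\Phi$ gives $C=N\breve{P}_kN^\top$, hence $\operatorname{rank}(C)\leq n_\zeta-n_d<n_\zeta$. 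Probabilistically this is exactly the point you were trying to exploit, turned against you: the measurement $\zeta_k$ has already been used once to estimate the input gap, so the residual innovation equals $N\breve{p}_k$ and is confined to the range of $N$; its covariance is degenerate, and the noise $R$ cannot restore definiteness (as it does for $\breve{P}_k$ in Proposition~\ref{proposition:GMT}) because it enters only through the sandwich $NRN^\top$.

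Consequently the objective is convex but not strictly convex, the minimizer is not unique, and the inverse in the stated formula does not exist as written. The stationarity equation $L_kC=\hat{\Sigma}^\xi_{k|k-1}\Phi^\top-B_{k-1}M_kR$ is nonetheless consistent---its right-hand side equals $\mathbb{E}[\tildexi_{k|k-1}(\Phi\tildexi_{k|k-1}+\nu_k)^\top]$, whose transpose has columns in $\operatorname{range}(N)=\operatorname{range}(C)$---so the honest repair is to exhibit a minimizer through the Moore--Penrose pseudoinverse, $L_k=(\hat{\Sigma}^\xi_{k|k-1}\Phi^\top-B_{k-1}M_kR)C^\dagger$, as is done in the input-and-state-estimation literature, rather than to prove definiteness. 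Note that the paper's own proof silently performs the same inversion, so this defect is inherited from the statement itself; the difference is that your proposal asserts a definiteness claim that is provably false, rather than merely leaving the invertibility unaddressed.
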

\begin{proof}
Note from~\eqref{eq:xi-covariance-estimate} that $\hat{\Sigma}^\xi_k$ is a quadratic (convex) function of $L_k$. Hence, the global solution to the minimization problem can be obtained by solving the following first-order optimality condition: 
    \begin{equation*}
    \begin{aligned}
        &\frac{\partial~\tr(\hat{\Sigma}_{k}^\xi)}{\partial L_k} = -2\bigg(\hat{\Sigma}_{k|k-1}^\xi\Phi^\top - L_k\Phi \hat{\Sigma}_{k|k-1}^\xi \Phi^\top - L_kR\\
        &- B_{k-1}M_kR + L_k\Phi B_{k-1}M_kR + L_kRM_k^\top B_{k-1}^\top \Phi^\top\bigg)=0.
    \end{aligned}
    \end{equation*}
The result follows by solving the above equality for $L_k$.
\end{proof}
\begin{algorithm}
\setstretch{1}

\caption{\texttt{SSIE}}\label{alg:SSIE}
\SetKwComment{Comment}{$\vartriangleright$ }{}
\SetKwInput{KwGiven}{Given}
\SetKwInput{KwInitialize}{Initialize}

\KwGiven{$g, B_{\{1,2,\ldots\}}, \Phi, Q, R, \hatpi, \Xi_0$}
\KwInitialize{$\hatxi_0 = \mathbb{E}_{\xi_0\sim \Xi_0}[\xi_0],\Sigma_{0}^\xi=\cov(\xi_0)$}
\KwIn{$\zeta_{k}$}
\Comment{\colorbox{yellow}{\bf Input gap Estimation}}
\vspace{3pt}

$A_k \gets \frac{\partial g(\xi)}{\partial \xi}\bigg\vert_{\xi =\hat{\xi}_{k-1}}$\;
$\hat{\zeta}_{k|k-1} \gets \Phi g(\hatxi_{k-1}) + \Phi B_{k-1}d^\hatpi_{k-1}$\;
$\breve{P}_{k} \gets \Phi A_{k-1} \Sigma_{k-1}^\xi A_{k-1}^\top \Phi^\top  + \Phi Q \Phi^\top + R$\;
$M_k \gets (B_{k-1}^\top \Phi^\top \breve{P}_k^{-1}\Phi B_{k-1})^{-1}B_{k-1}^\top \Phi^\top \breve{P}_k^{-1}$\;
$\hat{\Delta}_{k-1} \gets M_k(\zeta_{k}-\hat{\zeta}_{k|k-1})$\;
$\Sigma_{k-1}^{\Delta} \gets M_k \breve{P}_k M_k^\top$\;
\vspace{3pt}
\Comment{\colorbox{yellow}{\bf State prediction}}
\vspace{3pt}

$\hatxi_{k|k-1} \gets g(\hatxi_{k-1}) + B_{k-1}(d^\hatpi_{k-1}+\hat{\Delta}_{k-1})$\;
$\hat{\Sigma}_{k|k-1}^\xi \gets (\mathbb{I}-B_{k-1}M_k\Phi)A_{k-1}~\hat{\Sigma}_{k-1}^\xi A_{k-1}^\top(\mathbb{I}-B_{k-1}M_k\Phi)^\top + (\mathbb{I}-B_{k-1}M_k\Phi)Q(\mathbb{I}-B_{k-1}M_k\Phi)^\top + B_{k-1}M_kRM_k^\top B_{k-1}^\top$\;
\vspace{3pt}
\Comment{\colorbox{yellow}{\bf State estimation}}
\vspace{3pt}

$L_k \gets (\Sigma_{k|k-1}^\xi \Phi^\top-B_{k-1}M_kR)(R+\Phi \hat{\Sigma}^\xi_{k|k-1}\Phi^\top -\Phi B_{k-1}M_k R -RM^\top_kB_{k-1}^\top \Phi^\top)^{-1}$\;

$\hatxi_{k} \gets \hatxi_{k|k-1} + L_k(\zeta_{k}-\Phi \hatxi_{k|k-1})$\;

$\hat{\Sigma}_{k}^\xi \gets (\mathbb{I}-L_k\Phi)\hat{\Sigma}_{k|k-1}^\xi(\mathbb{I}-L_k\Phi)^\top + L_kRL_k^\top +(\mathbb{I}-L_k\Phi)B_{k-1}M_kRL_k^\top + L_kRM_k^\top B_{k-1}^\top(\mathbb{I}-L_k\Phi)^\top$\;

\vspace{3pt}
\KwOut{$\hatxi_{k},\hat{\Delta}_{k-1},\hat{\Sigma}_{k}^\xi,\Sigma_{k-1}^\Delta$}
\end{algorithm}

\section{Distributionally Robust Safety Constraint}\label{sec:DRMPC}
In this section, we propose a method to address the distributional shifts that result from an inaccurate behavior model. We begin by explaining the widely recognized CVaR risk measure in Sec.~\ref{subsec:cvar} to transform the deterministic safety constraint~\eqref{subeq:MPC-safety-constraint} into a probabilistic constraint. Subsequently, in Sec.~\ref{subsec:DR-CVaR}, we develop a distributionally robust optimization (DRO) problem that incorporates a confidence-based ambiguity. We also present its tractable reformulation for its practical implementation.

\subsection{Conditional Value at Risk}\label{subsec:cvar}
Conditional Value at Risk (CVaR) has been widely used in robotics, finance, and other domains for its adequacy in capturing risks in an extreme tail and has a tractable form when used in risk-aware optimization problems~\cite{mohajerin2018data}. CVaR of a random loss $S\sim \mathcal{P}$ measures the expected value of $S$ in the conditional distribution of its upper $(1-\alpha)$ tail for $\alpha \in [0,1)$~\cite{xin_bounds_2012} and is expressed as 
\begin{equation}\label{eq:cvar-definition}
    \cvar^\mathcal{P}_\alpha[S] \triangleq \inf_{z\in \mathbb{R}} \mathbb{E}_{S\sim \mathcal{P}}\left[z+\frac{\max \{S-z,0\}}{1-\alpha}\right].
\end{equation}

We aim to convert deterministic safety constraint~\eqref{subeq:MPC-safety-constraint} into a relaxed probabilistic constraint using CVaR to accommodate the stochastic characteristics of the obstacle trajectory predictions~\eqref{eq:obstacle-dynamics}. To this end, plug~\eqref{subeq:MPC-safety-constraint} into~\eqref{eq:cvar-definition} to obtain $$\cvar^{\hatp_{k+l|k}}_\alpha[s(x_{k+l},\hatxi_{k+l})]\leq 0,~l=1,2,\ldots,$$ where $\hatp_{k+l|k}$ is the nominal distribution of the loss function $s(x_{k+l},\hatxi_{k+l})$. In the following, we explain the two steps to derive $\hatp_{k+l|k}$: 1) we propagate multi-step prediction of the nominal obstacle state distribution and 2) compute the approximation of the loss distribution. 
\subsubsection{Propagating Nominal State Distribution}\label{subsec:state-predicting}

The mean state predictions, denoted $\muxi_{k+l}$ for $l=0, 1,\ldots$, are initialized for $l=0$ as $\muxi_{k} = \hatxi_{k}$, and are derived at each $k$ by the SSIE procedure. By autoregressively applying the given policy model $\hatpi$, we obtain the mean state predictions as
\begin{equation}\label{eq:multistep-mean-prediction}
\muxi_{k+l|k} = g(\muxi_{k+l-1|k}) + B_{k+l-1}d^\hatpi_{k+l-1}, \quad l = 1, 2, \ldots.
\end{equation}
Likewise, we propagate the covariance matrix as
\begin{align}\label{eq:multistep-cov-prediction}
\hat{\Sigma}^\xi_{k+l|k} = A_{k+l}\hat{\Sigma}^\xi_{k+l-1|k}A_{k+l}^\top + Q,
\end{align}
where we assumed the invariance of the state prediction error~\eqref{eq:xi-tilde-prediction-error} to hold throughout the multi-step predictions. 

\subsubsection{Computing Nominal Loss Distribution}
The Gaussian approximation of the nominal loss distribution $\hatp_{k+l|k}$ is obtained by the first-order Taylor series expansion of the loss function around the nominal state distribution as $\hatp_{k+l|k} = \mathcal{N}(\muhat^s_{k+l},\sigmahat^s_{k+l})$, where
\begingroup
\allowdisplaybreaks
\begin{subequations}
\begin{align}
    \muhat^s_{k+l} &= s(x_{k+l},\muxi_{k+l}) \label{subeq:mu_s}\\
    \sigmahat^s_{k+l} &= \nabla s(x_{k+l},\muxi_{k+l})^\top \hat{\Sigma}^\xi_{k+l} \nabla s(x_{k+l},\muxi_{k+l}).\label{subeq:sigma_s}
\end{align}    
\end{subequations}
\endgroup
The derivation follows the similar spirit to Section~\ref{sec:ssie} (or see~\cite[Section~4.3]{benaroya2005probability}). 
\subsection{Distributionally Robust CVaR}\label{subsec:DR-CVaR}
The nominal state distribution is subject to compounding error induced by the input gap at each time step. To address this problem, we consider a set of distributions $\mathbb{P}_{k+l|k}$ around $\hatp_{k+l|k}$, within which the worst-case safety loss is searched for. Correspondingly, the DR-CVaR measure is defined as
\begin{equation}\label{eq:dr-cvar}
\begin{aligned}
    &\drcvar^{\hat{{\mathcal{P}}}_{k+l|k}}_{\alpha,\theta}\left[s(x_{k+l},\hatxi_{k+l|k})\right]\\
    &~~~\triangleq \sup_{\hat{\mathcal{P}}\in \mathbb{P}_{k+l|k}}\cvar^{\hat{\mathcal{P}}}_{\alpha}\left[s(x_{k+l},\hatxi_{k+l|k})\right].
\end{aligned}   
\end{equation}
Two questions arise: 1) How can we choose the \emph{ambiguity set} $\mathbb{P}_{k+l|k}$? 2) Given the search space $\mathbb{P}_{k+l|k}$, how can we solve for the outer supremum? To answer these questions, we introduce the Wasserstein ambiguity set~\cite{rahimian2019distributionally} and propose a systematic method to choose the \emph{radius} of an ambiguity set. Then, we upper bound the constraint $\drcvar$ for computational tractability in solving the DRO problem.
\begin{definition}A Wassersteien ambiguity set of a ball of radius $\theta>0$ centered at $\hatp_{k+l|k}$ is defined as
\begin{equation*}
\mathbb{P}_{k+l|k} \triangleq \{\hat{\mathcal{P}}\in \mathfrak{F}(\mathbb{R})|W_2(\hatp,\hatp_{k+l|k})\leq \theta\},    
\end{equation*}
where $\mathfrak{F}(\mathbb{R})$ is the family of all probability distributions with finite second moments supported on $\mathbb{R}$. Moreover, for some distributions $\mathcal{R}_{\{1,2\}} \in \mathfrak{F}(\mathbb{R})$, $W_2(\mathcal{R}_1,\mathcal{R}_2)$  is the 2-Wasserstein distance between the two distributions: 
\begin{equation*}
\begin{aligned}
    &W_2(\mathcal{R}_1,\mathcal{R}_2) \triangleq\\
    &\left(\inf_{\psi \in \Psi(\mathcal{R}_1,\mathcal{R}_2)} \int_{\mathbb{R}\times \mathbb{R}} \|r_1-r_2\|^2_2~\psi(dr_1,dr_2)\right)^{\frac{1}{2}},
\end{aligned}
\end{equation*}
where $\Psi(\mathcal{R}_1,\mathcal{R}_2)$ denotes the set of all joint probability distributions of $r_1\in \mathbb{R}$ and $r_2\in \mathbb{R}$ with marginals $\mathcal{R}_1$ and $\mathcal{R}_2$, respectively~\cite[Definition~1]{kuhn_wasserstein_2019}.
\end{definition}

\subsubsection{Confidence-Based Ambiguity Set}\label{subsec:model-confidence}
A commonly used approach to define $\theta$ is to heuristically set $\theta$ sufficiently large to cover the maximum plausible distribution shifts~\cite{hakobyan2021wasserstein}. However, such approach lacks flexibility and reduces the feasible solution set, often encountering computational challenges~\cite{ben2010soft}. Inspired by~\cite{de2000mahalanobis}, we use the input gap and its covariance estimate $\hat{\Delta}_k$ and $\Sigma^{\Delta}_k$ to systematically choose $\theta$.

Specifically, to evaluate the confidence of the behavior model, denoted as $F^\hatpi_k$, we cache the last $m$-step estimates $\{(\hat{\Delta}_{k-m},\Sigma^\Delta_{k-m}),\ldots,(\hat{\Delta}_{k-1},\Sigma^\Delta_{k-1})\}$ to compute
\begin{equation}\label{eq:mahalanobis}
    F^\hatpi_k \triangleq \sqrt{\frac{1}{m}\sum_{l=1}^m\hat{\Delta}^\top_{k-l} (\Sigma^{\Delta}_{k-l})^{-1} \hat{\Delta}_{k-l}}.
\end{equation}
A smaller $F^\hatpi_k$ indicates that the input gap distribution is reliably close to zero.

We propose the following dynamic ambiguity set model that controls the conservatism in the DRO problem based on the accuracy of the behavior model: 
\begin{equation}\label{eq:dynamic-ambiguity-set}
    \mathbb{P}_{k+l|k} \triangleq \{\hat{\mathcal{P}}\in \mathfrak{F}(\mathbb{R})|W_2(\hatp,\hatp_{k+l|k})\leq \theta(F^\hatpi_k)\},
\end{equation}
where $\theta(F^\hatpi_k) \triangleq \theta_{\max}\tanh(\tau F^\hatpi_k)$.
In this model, $\theta_{\max} \in \mathbb{R}_{\geq 0}$ represents the maximum radius, and $\tau \in \mathbb{R}_{\geq 0}$ is a temperature parameter that modulates the sensitivity of the ambiguity radius to confidence $F^\hatpi_k$. 

\subsubsection{Tractable Reformulation}
Evaluating the safety constraint~\eqref{eq:dr-cvar} with the dynamic ambiguity set~\eqref{eq:dynamic-ambiguity-set} presents a computational challenge as it involves solving an infinite dimensional optimization problem. To address this, we turn to the following Lemma:
\begin{lemma}(Gelbrich Bound~\cite[Theorem~2.1]{gelbrich_formula_1990})\label{lemma:gelbrich-bound}
    Let two distributions $\hatp$ and $\hat{\mathcal{P}}_{k+l|k}$ on $\mathbb{R}$ have means $\muhat^s,\muhat^s_{k+l}\in \mathbb{R}$ and variances $\sigmahat^s,\sigmahat^s_{k+l} \in \mathbb{R}_{\geq 0}$, respectively. Then, the following inequality holds:
    \begin{equation*}
        W_2(\hatp,\hatp_{k+l|k}) \geq \sqrt{(\muhat^s-\muhat^s_{k+l})^2 + (\sigmahat^s-\sigmahat^s_{k+l})^2}.
    \end{equation*}
    The bound is exact if $\hatp$ and $\hatp_{k+l|k}$ are elliptical distributions with the same density generator.
\end{lemma}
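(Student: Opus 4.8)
The plan is to recognize the statement as the scalar case of Gelbrich's moment-based lower bound on the $2$-Wasserstein distance and to derive it directly from the coupling definition of $W_2$ given in the Definition above. The guiding observation is that the transport cost of \emph{any} coupling of $\hatp$ and $\hatp_{k+l|k}$ depends only on the first two moments of the marginals and on the cross-covariance, and that the cross-covariance can raise the coupling's benefit only up to the Cauchy--Schwarz ceiling. Throughout I let $S_1\sim\hatp$ and $S_2\sim\hatp_{k+l|k}$, with $\sigmahat^s,\sigmahat^s_{k+l}$ playing the role of the standard deviations of the two laws.

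First I would fix an arbitrary coupling $\psi\in\Psi(\hatp,\hatp_{k+l|k})$ and expand its quadratic cost,
\begin{equation*}
\mathbb{E}_\psi\!\left[(S_1-S_2)^2\right]=\mathbb{E}[S_1^2]-2\,\mathbb{E}[S_1S_2]+\mathbb{E}[S_2^2].
\end{equation*}
Substituting $\mathbb{E}[S_1^2]=(\sigmahat^s)^2+(\muhat^s)^2$, $\mathbb{E}[S_2^2]=(\sigmahat^s_{k+l})^2+(\muhat^s_{k+l})^2$, and $\mathbb{E}[S_1S_2]=\cov(S_1,S_2)+\muhat^s\muhat^s_{k+l}$, the cost regroups as
\begin{equation*}
\mathbb{E}_\psi\!\left[(S_1-S_2)^2\right]=(\muhat^s-\muhat^s_{k+l})^2+(\sigmahat^s)^2+(\sigmahat^s_{k+l})^2-2\,\cov(S_1,S_2).
\end{equation*}
Since every term except the covariance is pinned down by the marginals, minimizing the transport cost amounts to maximizing $\cov(S_1,S_2)$.

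Next I would invoke Cauchy--Schwarz, $\cov(S_1,S_2)\le\sigmahat^s\sigmahat^s_{k+l}$, giving the coupling-independent bound
\begin{equation*}
\mathbb{E}_\psi\!\left[(S_1-S_2)^2\right]\ge(\muhat^s-\muhat^s_{k+l})^2+(\sigmahat^s-\sigmahat^s_{k+l})^2.
\end{equation*}
As the right-hand side does not depend on $\psi$, taking the infimum over $\Psi(\hatp,\hatp_{k+l|k})$ on the left and then a square root yields the stated inequality. For exactness I would use the fact that in one dimension the $W_2$-optimal coupling is the monotone (quantile) coupling; when $\hatp$ and $\hatp_{k+l|k}$ are elliptical with a common density generator this reduces to the affine map $S_2=\muhat^s_{k+l}+(\sigmahat^s_{k+l}/\sigmahat^s)(S_1-\muhat^s)$, which carries $\hatp$ onto $\hatp_{k+l|k}$ and has correlation one, thereby saturating Cauchy--Schwarz and attaining the bound.

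The lower-bound half is routine once the cost is written in moment form, so the main obstacle is the exactness claim: one must check that the affine coupling actually reproduces the second marginal, and it is precisely the shared-generator hypothesis that guarantees a location--scale family is mapped onto itself by a positive affine transformation, as well as confirm this feasible coupling is in fact $W_2$-optimal. Should a self-contained derivation be unnecessary for our purposes, the whole statement may instead be cited verbatim from~\cite[Theorem~2.1]{gelbrich_formula_1990}.
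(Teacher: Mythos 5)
Your derivation is correct, but it takes a genuinely different route from the paper: the paper offers no proof at all, importing the statement directly as Theorem~2.1 of Gelbrich's paper, so your closing remark about citing it verbatim is in fact exactly what the authors do. Your argument is the standard elementary one-dimensional proof: expand the quadratic cost of an arbitrary coupling into marginal moments plus a cross-covariance term, note that only the cross-covariance depends on the coupling, and cap it via Cauchy--Schwarz to get the coupling-independent lower bound $(\muhat^s-\muhat^s_{k+l})^2+(\sigmahat^s-\sigmahat^s_{k+l})^2$; exactness then follows because the positive affine map between two location--scale laws with a common generator is a feasible coupling with correlation one, hence attains the bound---and attaining a universal lower bound already certifies optimality, so your stated worry about separately confirming $W_2$-optimality (via the monotone-coupling characterization) is unnecessary. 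Two minor points. First, your reading of $\sigmahat^s,\sigmahat^s_{k+l}$ as standard deviations is the correct one, and is needed for the inequality to be true as written; the lemma's word ``variances'' is loose, since for Gaussians $W_2^2=(\muhat^s-\muhat^s_{k+l})^2+(\sigmahat^s-\sigmahat^s_{k+l})^2$ holds with $\sigmahat$ denoting standard deviations. Second, the affine map requires $\sigmahat^s>0$; the degenerate case $\sigmahat^s=0$ should be dispatched separately, where the unique (product) coupling yields equality directly. What your route buys is a self-contained, elementary proof of precisely the scalar case the paper uses; what the citation buys is brevity and the full multivariate (covariance-matrix, trace--square-root) form of Gelbrich's result, which is more general than anything needed here.
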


\begin{theorem}
Suppose that the nominal distribution of the loss $s(x_{k+l},\hatxi_{k+1|k})$ is given as $\mathcal{N}(\mu^s_{k+l}, (\sigma^s_{k+l})^2)$, where $\mu^s_{k+l}$ and $\sigma^s_{k+l}$ are defined in~\eqref{subeq:mu_s} and~\eqref{subeq:sigma_s}, respectively. Then, we have the following upper-bound for the $\drcvar$ with the radius $\theta(F^\hatpi_k)$:
\begin{align}
&\drcvar_{\alpha,\theta(F^\hatpi_k)}^{\hatp_{k+l|k}} [s(x_{k+l},\hatxi_{k+l})] \notag\\
&~~~~~~~~~\leq \mu^s_{k+l}+\gamma \sigma^s_{k+l} + \theta (F^\hatpi_k)\sqrt{1+\gamma^2},\label{eq:DR-upperbound}
\end{align}
where $\gamma \triangleq \sqrt{\alpha/(1-\alpha)}$.
\end{theorem}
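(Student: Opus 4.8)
The plan is to collapse the infinite-dimensional supremum over the Wasserstein ball into a finite-dimensional maximization over only the first two moments of the loss, using the Gelbrich bound (Lemma~\ref{lemma:gelbrich-bound}) together with a classical moment-based bound on CVaR. First I would establish that for \emph{any} distribution $\hatp$ with mean $\muhat^s$ and variance $(\sigmahat^s)^2$, the risk satisfies the two-moment bound $\cvar^{\hatp}_\alpha[S]\leq \muhat^s+\gamma\sigmahat^s$. Starting from the infimum representation~\eqref{eq:cvar-definition}, I would write $(S-z)^+=\tfrac{1}{2}[(S-z)+|S-z|]$ and apply Jensen's inequality, $\mathbb{E}[|S-z|]\leq\sqrt{\mathbb{E}[(S-z)^2]}=\sqrt{(\sigmahat^s)^2+(\muhat^s-z)^2}$, to upper bound the positive-part expectation. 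Substituting this into the infimum, shifting $z$ by $\muhat^s$, and solving the resulting one-dimensional convex minimization in closed form recovers exactly the coefficient $\gamma=\sqrt{\alpha/(1-\alpha)}$. Crucially this bound holds for every admissible distribution, hence for every member of the ambiguity set.

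Second, I would use this pointwise bound to move the supremum onto the moments:
\begin{equation*}
\drcvar_{\alpha,\theta(F^\hatpi_k)}^{\hatp_{k+l|k}}[s(x_{k+l},\hatxi_{k+l})] = \sup_{\hatp\in \mathbb{P}_{k+l|k}} \cvar^{\hatp}_\alpha[S] \leq \sup_{\hatp\in \mathbb{P}_{k+l|k}} \left(\muhat^s + \gamma \sigmahat^s\right).
\end{equation*}
By Lemma~\ref{lemma:gelbrich-bound}, every $\hatp\in\mathbb{P}_{k+l|k}$ obeys $\sqrt{(\muhat^s-\mu^s_{k+l})^2+(\sigmahat^s-\sigma^s_{k+l})^2}\leq W_2(\hatp,\hatp_{k+l|k})\leq\theta(F^\hatpi_k)$, so the remaining supremum is relaxed to the maximization of the linear functional $(\muhat^s,\sigmahat^s)\mapsto\muhat^s+\gamma\sigmahat^s$ over a Euclidean ball of radius $\theta(F^\hatpi_k)$ centered at $(\mu^s_{k+l},\sigma^s_{k+l})$.

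Third, I would solve this relaxed problem by Cauchy--Schwarz: the maximum of a linear objective $\langle(1,\gamma),\,(\muhat^s,\sigmahat^s)\rangle$ over such a ball equals its value at the center plus the radius times the norm of the coefficient vector, giving $\mu^s_{k+l}+\gamma\sigma^s_{k+l}+\theta(F^\hatpi_k)\sqrt{1+\gamma^2}$, which is precisely~\eqref{eq:DR-upperbound}.

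I expect the main obstacle to be the first step, the two-moment CVaR bound. The Gelbrich relaxation and the Cauchy--Schwarz maximization are routine, but carefully carrying out the closed-form one-dimensional minimization---selecting the correct stationary point, verifying that $0\le\alpha<1$ places it inside the feasible range, and simplifying the algebra so that the coefficient collapses to exactly $\gamma=\sqrt{\alpha/(1-\alpha)}$---is where the real work lies. Alternatively, I could invoke the standard worst-case CVaR formula for moment ambiguity sets and cite it directly, bypassing the explicit derivation.
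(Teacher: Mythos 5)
Your proposal is correct, and its skeleton matches the paper's proof: both pass from the Wasserstein ball to a ball in $(\mu,\sigma)$-space via the Gelbrich bound (Lemma~\ref{lemma:gelbrich-bound}), both rest on the fact that the worst-case CVaR of a distribution with known mean and variance is $\mu+\gamma\sigma$, and both finish by maximizing $\mu+\gamma\sigma$ over that ball. Where you deviate is in the execution of two sub-steps, and both of your choices are more elementary. First, the paper decomposes the supremum over the enlarged moment-ambiguity set $\barp$ into an outer supremum over $(\mu,\sigma)$ and an inner supremum over the Chebyshev set $\mathcal{C}(\mathbb{R},\mu,\sigma^2)$, and invokes the exact worst-case formula $\sup_{\mathcal{Q}}\cvar^{\mathcal{Q}}_\alpha=\mu+\gamma\sigma$ as a cited result; you instead prove the one-sided bound $\cvar^{\hatp}_\alpha[S]\le \muhat^s+\gamma\sigmahat^s$ for \emph{every} admissible $\hatp$ directly from the infimum representation, via $(S-z)^+=\tfrac12[(S-z)+|S-z|]$ and Jensen. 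This is the classical Scarf-type derivation; the one-dimensional minimization does close in the stated form (the stationary point is admissible for every $\alpha\in(0,1)$, and $\alpha=0$ is trivial since then $\gamma=0$ and $\cvar_0$ is the mean), and only the inequality direction is needed for an upper bound, so nothing is lost. Second, for the final maximization over the ball the paper sets up a Lagrangian and solves the dual, obtaining $\lambda=\tfrac1\theta\sqrt{(1+\gamma^2)/2}$; you maximize the linear functional $(\mu,\sigma)\mapsto\mu+\gamma\sigma$ over the Euclidean ball by Cauchy--Schwarz, which yields $\mu^s_{k+l}+\gamma\sigma^s_{k+l}+\theta(F^\hatpi_k)\sqrt{1+\gamma^2}$ in one line with no duality bookkeeping. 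What the paper's route buys is that every intermediate step after the set inclusion $\mathbb{P}_{k+l|k}\subseteq\barp$ is an equality, so it exhibits exactly where the only slack lies; what yours buys is a self-contained, citation-free argument that is shorter and easier to verify.
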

\begin{proof}
    Let $\bar{\mathbb{P}}_{k+l|k}^\theta$ denote an ambiguity set around the estimated mean $\muhat^s_{k+l}$ and variance $(\sigmahat^s_{k+l})^2$, i.e. $ \bar{\mathbb{P}}_{k+l|k}^\theta \triangleq \{(\mu,\sigma^2) \in \mathbb{R}\times \mathbb{R}_{\geq 0}~|~(\mu-\muhat^s_{k+l})^2 + (\sigma-\sigmahat^s_{k+l})^2 \leq \theta^2 \}.$
    It follows from Lemma~\ref{lemma:gelbrich-bound} that $\mathbb{P}_{k+l|k}\subseteq \barp$~\cite[Proposition~1]{kuhn_wasserstein_2019}, which gives
    \begin{align*}
        &\drcvar_{\alpha,\theta(F^\hatpi_k)}^{\hatp_{k+l|k}} [s(x_{k+l},\hatxi_{k+l})] \\
        &\leq \sup_{\bar{\mathcal{P}}\in \barp}\cvar^{\bar{\mathcal{P}}}_{\alpha}[s(x_{k+l},\hatxi_{k+l|k})]\\
        &= \sup_{(\mu_{k+l},\sigma^2_{k+l})\in \barp} \sup_{\mathcal{P}\in \mathcal{C}(\mathbb{R},\mu_{k+l},\sigma_{k+l}^2)} \cvar^{\mathcal{P}}_{\alpha}[s(x_{k+l},\hatxi_{k+l})],
    \end{align*}
where $\mathcal{C}(\mathbb{R},\mu_{k+l},\sigma_{k+l})$ denotes the Chebyshev uncertainty set that contains all distributions on $\mathbb{R}$ with mean $\mu_{k+l}$ and variance $\sigma_{k+l}$~\cite{kuhn_wasserstein_2019}. 

The inner supremum has a closed form solution given by
$$\sup_{\mathcal{Q}\in \mathcal{C}(\mathbb{R},\mu_{k+l},\sigma_{k+l}^2)} \cvar^{\mathcal{Q}}_{\alpha}[s(x_{k+l},\hatxi_{k+l})] = \mu_{k+l}+\gamma \sigma_{k+l},$$
where $\gamma \triangleq \sqrt{\frac{\alpha}{1-\alpha}}$~\cite[Proposition~2]{yu_general_2009}. For a compact search space (finite second moment), we can rewrite the outer supremum as the following optimization problem:
\begin{align*}
    &\min_{(\mu_{k+l},\sigma_{k+l})} -\mu_{k+l} -\gamma \sigma_{k+l}\\
    \textbf{s.t.}~&  (\mu-\muhat^s_{k+l})^2 + (\sigma-\sigmahat^s_{k+l})^2 \leq \theta^2.
\end{align*}

The dual problem is obtained as
\begin{align}
    &\max_{\lambda\geq 0} \min_{\mu,\sigma} \mathcal{L}(\mu_{k+l},\sigma_{k+l},\lambda)\notag\\
    &=\max_{\lambda\geq 0} ~(\muhat^s_{k+l}+\frac{1}{2\lambda})+\gamma(\sigmahat^s_{k+l}+\frac{\gamma}{2\lambda}) - \lambda \theta^2,\label{eq:dual}
\end{align}
where $\mathcal{L}$ is the Lagrangian and $\lambda$ is its associated multiplier. The solution to this problem is obtained as $\lambda = \frac{1}{\theta} \sqrt{\frac{1+\gamma^2}{2}}$, substituting which into~\eqref{eq:dual} gives the desired result~\eqref{eq:DR-upperbound}.
\end{proof}
\subsection{Distributionally Robust MPC}\label{subsec:SIED-MPC}
Integrating the SSIE algorithm and the DR safety constraint~\eqref{eq:DR-upperbound}, the MPC problem~\eqref{eq:MPC} can be reformulated as 
\begin{align}\label{eq:DRMPC}
    \min_{\bf u,x} &\sum_{l=0}^{L-1} r(x_{k+l},u_{k+l})+p(x_{k+L})\\
    \textbf{s.t}~&x_{k+l+1}=f(x_{k+l}, u_{k+l}),~l=0,\ldots,L-1\notag\\
    &\hatxi_{k+l|k}= g(\hatxi_{k+l-1|k}) + B_{k-1}d^\hatpi_{k+l-1} \notag,~l=1,\ldots,L\\
    &\mu^s_{k+l}+\gamma \sigma^s_{k+l} + \theta (F^\hatpi_k)\sqrt{1+\gamma^2}\leq 0,~l=0,\ldots,L \notag\\
    &u_{k+l}\in \mathbb{U},~l=0,\ldots, L-1\notag.
\end{align}
The reformulated problem~\eqref{eq:DRMPC} is tractable and can be solved by using off-the-shelf nonlinear optimization algorithms~\cite{FORCESNLP}. The \texttt{SIED-MPC} algorithm is summarized in Algorithm~\ref{alg:DR-Control}. 
\begingroup
\allowdisplaybreaks
\begin{algorithm}
\setstretch{1}
\SetKwInput{KwGiven}{Given}
\caption{\texttt{SIED-MPC}}\label{alg:DR-Control}
\SetKwComment{Comment}{$\vartriangleright$ }{}

\KwGiven{$\mathbb{F}=\phi$, $m$, $\theta_{\max}$, $\tau$, $L$}
\For{$t=1:T$}{
Observe $x_t$ and $\zeta_t$\;
$\hatxi_t,\hat{\Delta}_{t-1},\hat{\Sigma}^\xi_{t},\Sigma^\Delta_{t-1} \leftarrow \texttt{SSIE}(\zeta_t)$ \;
$\mathbb{F} \leftarrow \mathbb{F} \cup (\hat{\Delta}_{t-1}, \Sigma^\Delta_{t-1})$\;
\If {$|\mathbb{F}| > m$}{Keep the most recent $m$ pairs}
Evaluate $F^\hatpi_t$ with $\mathbb{F}$ using~\eqref{eq:mahalanobis}\;
$\theta \leftarrow \theta_{\max}\tanh (\tau F^\hatpi_t)$\;
\For{l=1:L}{
Compute $(\muhat^\xi_{t+l}, \hat{\Sigma}^\xi_{t+l})$ using~\eqref{eq:multistep-mean-prediction},\eqref{eq:multistep-cov-prediction}\;
}
Solve nonlinear MPC~\eqref{eq:DRMPC} to obtain $u^\star_t$\;
Apply $u^\star_t$ to the system\;
}
\end{algorithm}
\endgroup

\section{Simulations}\label{sec:simulation}
To validate the efficacy of the proposed \texttt{SIED-MPC} algorithm, we conducted autonomous driving simulations using the open-source platform CARLA~\cite{Dosovitskiy17}. The objective is to control an ego vehicle to follow a given trajectory $x^g_t$ while avoiding collisions with an obstacle, the behavior model of which is inaccurately known to the ego vehicle. The simulation scenario is illustrated in Fig.~\ref{fig:mainfigure}. Videos of the simulations can be found at~\href{http://sied-mpc.notion.site}{http://sied-mpc.notion.site}.
\begin{figure}[H]
    \centering
    \includegraphics[width = 0.75\columnwidth]{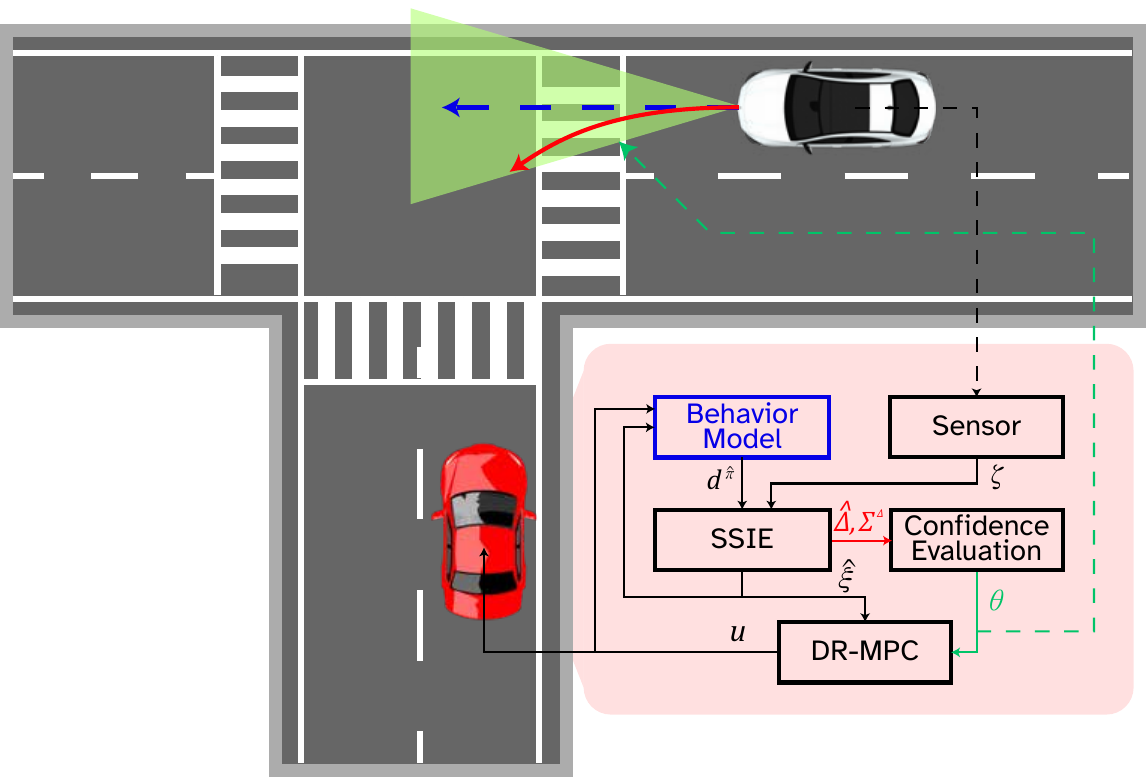}
    \caption{(Scenario description) The goal of the ego vehicle (red) is to safely turn left at the intersection without colliding with the obstacle vehicle (white). While the behavior model predicts the obstacle to adhere to the CSAV model, the actual obstacle unexpectedly steers to the left.}
    \label{fig:mainfigure}
\end{figure}
\subsection{Simulation Setup}
\subsubsection{Dynamics}
The dynamics of the ego vehicle are modeled using the kinematic bicycle model\footnote{Superscript $e$ denotes the \emph{ego} vehicle. We use $o$ to denote the \emph{obstacle}.}:
\begin{align}\label{eq:kinematic-bicycle}
    \begin{bmatrix}
        \mathrm{x}^e_{k+1}\\
        \mathrm{y}^e_{k+1}\\
        \phi^e_{k+1}\\
        v^e_{k+1}
    \end{bmatrix}
    =
    \begin{bmatrix}
        \mathrm{x}^e_k\\
        \mathrm{y}^e_k\\
        \phi^e_k\\
        v^e_k
    \end{bmatrix}
    +T_s
    \begin{bmatrix}
         v^e_k\cos (\phi^e_k+ \beta^e_k)\\
        v^e_k\sin (\phi^e_k + \beta^e_k)\\
        \frac{v^e_k}{L^e_c}\sin(\beta^e_k)\\
        a^e_k
    \end{bmatrix},
\end{align}
where the step size is $T_s = 0.1$, and the car length is $L^e_c= L^o_c = 4.611m$. The state vector $[\mathrm{x}^e_k, \mathrm{y}^e_k, \phi^e_k, v^e_k]^\top$ represents the position, heading angle, and velocity of the vehicle. The control input for the ego vehicle, $u_k = [a^e_k,\delta^e_k]^\top$, comprises the acceleration and the steering angle. Moreover, $\beta^e_k \triangleq \tan^{-1}\left(\frac{1}{2}\tan(\delta^e_k)\right)$ is the slip angle. 

While the same kinematic bicycle model~\eqref{eq:kinematic-bicycle} also represents the dynamics of the obstacle, implementing the \texttt{SIED-MPC} Algorithm~\ref{alg:DR-Control} requires linearizations. To this end, we treat $a$ and $\beta$ as control inputs for the obstacle and the Jacobian matrices $A_{k-1}$ and $B_{k-1}$ are obtained in~\eqref{eq:Jacobian}. Moreover, $\Phi = \mathbb{I}$ and the noise parameters are given as $Q = R = \text{diag}(1,1,0.05,0.05)$.

\begin{figure*}
\raggedleft
\begin{align}
A_{k-1} = \begin{bmatrix}
    1&0&-T_s v^o_{k-1} \sin(\phi^o_{k-1}+ \beta^o_{k-1})& T_s\cos(\phi^o_{k-1}+\beta^o_{k-1})\\
    0&1&T_s v^o_{k-1} \cos(\phi^o_{k-1}+\beta^o_{k-1})& T_s \sin(\phi^o_{k-1}+\beta^o_{k-1})\\
    0&0&1&T_s\sin(\beta^o_{k-1})/L^o_c\\
    0&0&0&1
\end{bmatrix},~
B_{k-1} = \begin{bmatrix}
        0 & -v^o_{k-1}\sin(\phi^o_{k-1}+\beta^o_{k-1}) \\
        0 & v^o_{k-1}\cos(\phi^o_{k-1}+\beta^o_{k-1}) \\
        0 & \frac{T_s v^o_{k-1}\cos (\beta^o_{k-1})}{L^o_c}\\
        T_s & 0
    \end{bmatrix}
\label{eq:Jacobian}
\end{align}
\vspace{-10pt}
\end{figure*}
\subsubsection{MPC}
The cost function for the MPC is defined as 
$r(x_{k+l},u_{k+l}) = \|x_{k+l}-x^g_l\|^2_S + \|\Delta u_k\|^2_T$ and $p(x_{k+L}) = \|x_{k+L} -x^g_L\|^2,$ where $S=\text{diag}(1,1,10,0.2)$ and $T=\text{diag}(0.2,4)$. The constraints are specified by $\mathbb{U} = \{(a_k, \delta_k)\in \mathbb{R} \times \mathbb{R}~\big|~|a_k|\leq 3\text{m/s}^2,~ |\delta_k|\leq 1.22~\text{rad},|\delta_k-\delta_{k-1}|\leq 0.05~\text{rad}\}$, with initial steering condition $\delta^e_0 = 0$. Furthermore, the parameters for the $\cvar$ and confidence evaluation are set with $\alpha = 0.85$, $\theta_{\max} = 5$ and $\tau=1$. The safety loss function is chosen as $s(x,\xi) = -\|[\mathrm{x}^e,\mathrm{y}^e]^\top -[\mathrm{x}^o,\mathrm{y}^o]^\top\|^2$. The MPC horizon and the confidence dataset size are chosen respectively as $L=50$ and $m=30$. We used ForcesPro~\cite[Sec.~3]{FORCESNLP} to solve the MPC problem with a maximum of 500 iterations for the optimization solver.

\subsubsection{Baseline Algorithm}
We compared our \texttt{SIED-MPC} algorithm with two baseline approaches: Mean-MPC and DR-MPC, differences of which are summarized in Table~\ref{tab:baseline}. 
\begin{table}
    \centering
    \caption{Methods Comparison}
    \begin{tabular}{c||cc|c}\toprule
                   & Mean-MPC & DR-MPC      & Ours\\ \midrule
        Controller & MPC      & DR-MPC      & DR-MPC\\
        Filter     & EKF      & EKF         & SSIE \\
        $\theta_k$ & $\theta_{\max}$ & $\theta_{\max}$ & $\theta(F^\hatpi_k)$ \\ \bottomrule
\end{tabular}
    \label{tab:baseline}
\end{table}

For all approaches, the behavior model is set to the Constant Steering And Velocity (CSAV) model. Although this model predicts the obstacle behavior almost accurately during the straight sections of the route, it fails to account for deviations during the obstacle’s turn at the intersection, where the obstacle exhibits non-zero jerk and steering angle. Given the inherent randomness in the simulation, we conducted 20 simulation runs under identical conditions to ensure the reliability of our results.

\subsection{Evaluation}
\subsubsection{State Estimation Comparison}
We compare the state estimation performance between EKF and SSIE. The element-wise state estimation errors are plotted in Fig.~\ref{fig:StateEstimation}. 
\vspace{-20pt}
\begin{figure}[H]
    \centering
    \includegraphics[width = 0.8\columnwidth]{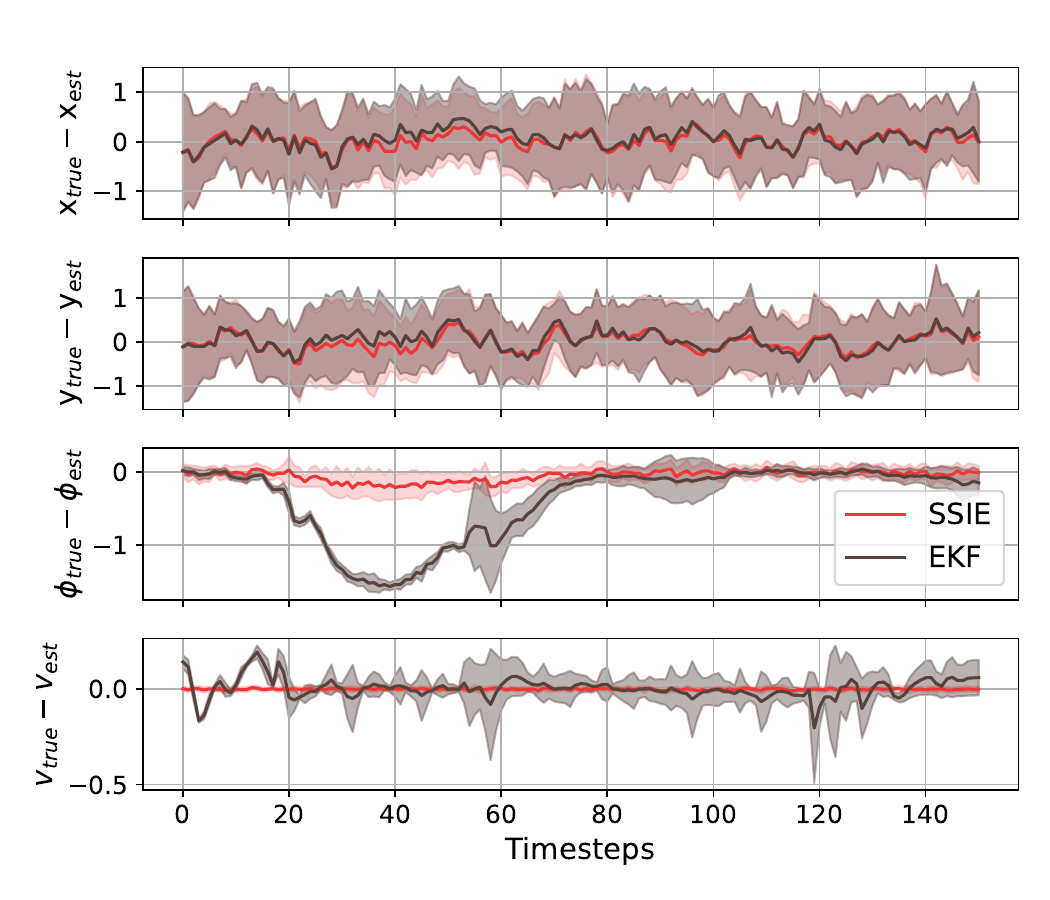}
    \vspace{-5pt}
    \caption{SSIE makes accurate estimations across all variables, whereas EKF makes errors during $t\in[20,80]$ when the input estimations deviate from the behavior model predictions. $x,y$ are measured in $m$, $v$ in $m/s$, and the heading angle $\phi$ in degrees.}
    \label{fig:StateEstimation}
\end{figure}
While EKF delivered reliable estimates of the obstacle vehicle positions, its lack of input (gap) estimation resulted in less accurate predictions of the vehicle's heading angle and velocity. In contrast, SSIE provided more accurate estimates for all obstacle state variables. 

To further understand the observed performance gap between EKF and SSIE, we analyze the tracking accuracy of SSIE's input estimations. The performance of SSIE’s input tracking is depicted in Fig.~\ref{fig:InputEstimation}, where we also illustrate the dynamic Wasserstein radius $\theta$, calculated based on the discrepancies between the behavior model’s outputs and the actual estimations. Notably, the radius size increases during the time interval $20 \leq t \leq 70$ and $t\geq 120$, where the input estimations either diverge from the behavior model’s predictions or its variance is large.
\begin{figure}
    \centering
    \includegraphics[width = 0.8\columnwidth]{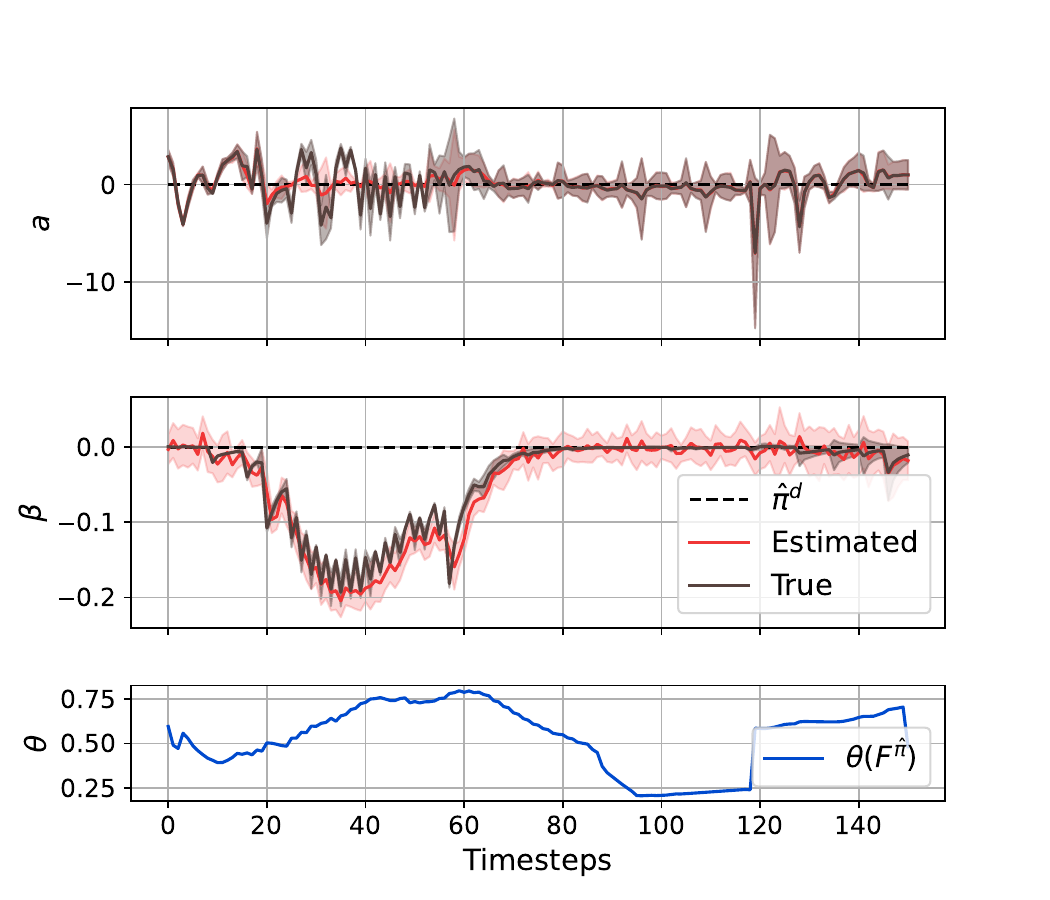}
    \caption{The CSAV model relatively predicts acceleration accurately, but not the slip angle. SSIE effectively tracks these deviations, as reflected in the dynamic adjustments of the Wasserstein radius $\theta$.}
    \label{fig:InputEstimation}
\end{figure}
\subsubsection{Safety Comparison}
While integrating estimation methods with the DR constraint provides additional safety margins that enhance safety, inaccuracies in state estimation and behavior model can be amplified within the MPC loop. We compare collision avoidance rates and computational complexity of the three methods in Fig.~\ref{fig:Computation}.
\vspace{-10pt}
\begin{figure}[H]
    \centering
    \includegraphics[width = 0.75\columnwidth]{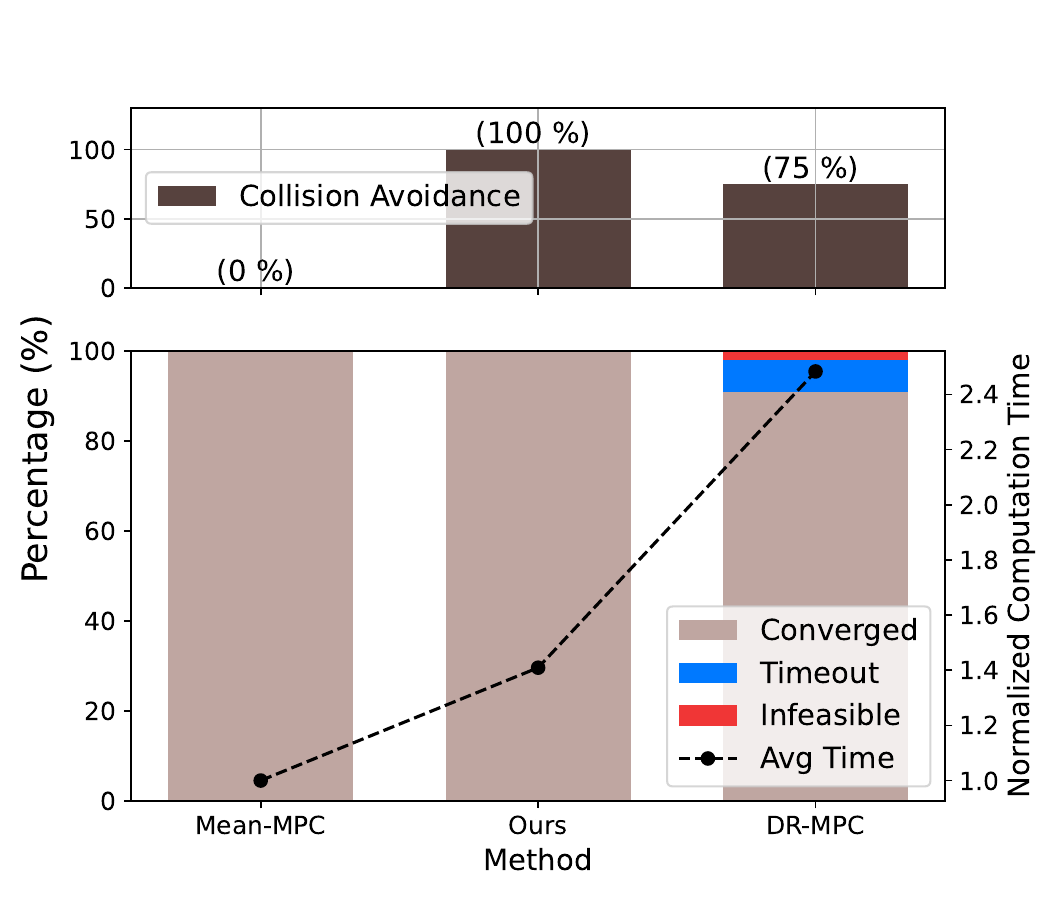}
    \caption{Mean-MPC failed to avoid collision in all episodes, whereas DR-MPC and our method achieved $75\%$ and $100\%$, respectively. To compare computational efficiency, computation times are normalized to that of Mean-MPC. The Mean-MPC had the shortest average computation time. Our algorithm required an additional $40\%$ computation time on average, while DR-MPC's computation time exceeded Mean-MPC's by $150\%$. }
    \label{fig:Computation}
\end{figure}

The collision avoidance rate for Mean-MPC shows that it lacked sufficient safety margins to avoid a noisy obstacle (estimation error) that behaves differently from the behavior model (prediction error). While the DR-MPC method had additional safety margins, its collision avoidance was compromised in multiple cases by inaccurate estimations and overly conservative constraints. Specifically, the constraints rendered the optimization problem either infeasible or necessitated more than the maximum iterations for convergence, indicating that adequate solutions could not be computed within the control interval\footnote{Physical time spent for per-iteration computation is described in~\cite[Sec.~4]{FORCESNLP}. We report a normalized computation time in Fig.~\ref{fig:Computation} for different computation and optimization configurations.}. In contrast, our method achieved a zero collision rate, while computations were made within the desired time intervals. 

Lastly, we evaluate the total cost over the predictive horizon at each timestep. As depicted in Fig.~\ref{fig:Cost}, the DR-MPC method reached significantly higher costs deviating from the desired ego vehicle states more than our method, primarily in an effort to avoid collisions. 
\vspace{-10pt}
\begin{figure}[H]
    \centering
    \includegraphics[width = 0.9\columnwidth]{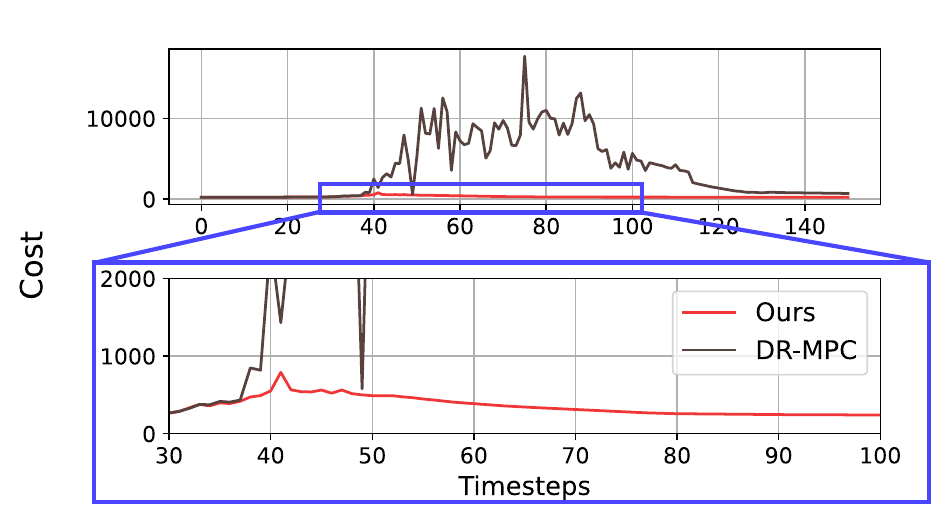}
    \caption{Cost evaluated at each time step. DR-MPC reported the average cost and its standard deviation as $3713.66\pm 3861.90$, in contrast to ours $295.08 \pm 99.66$. }
    \label{fig:Cost}
\end{figure}

\section{Conclusion}

In this work, we proposed a safe motion control framework that can address both the estimation bias and the predictive distribution shift caused primarily by the inaccuracies of a behavior model. We derived the SSIE algorithm to produce an unbiased obstacle state estimation and an optimal input gap estimation when only a nominal behavior model is given. Leveraging the results from the SSIE, we proposed a confidence-based DR-MPC problem to address the shift in predictive distribution that reasonably adjusts the conservatism of the control.The integrated \texttt{SIED-MPC} algorithm demonstrated improved collision avoidance in an autonomous driving scenario with enhanced state estimation accuracy and shorter computation.

\bibliographystyle{IEEEtran} 
\bibliography{refs} 

\begin{thebibliography}{10}
\providecommand{\url}[1]{#1}
\csname url@samestyle\endcsname
\providecommand{\newblock}{\relax}
\providecommand{\bibinfo}[2]{#2}
\providecommand{\BIBentrySTDinterwordspacing}{\spaceskip=0pt\relax}
\providecommand{\BIBentryALTinterwordstretchfactor}{4}
\providecommand{\BIBentryALTinterwordspacing}{\spaceskip=\fontdimen2\font plus
\BIBentryALTinterwordstretchfactor\fontdimen3\font minus \fontdimen4\font\relax}
\providecommand{\BIBforeignlanguage}[2]{{%
\expandafter\ifx\csname l@#1\endcsname\relax
\typeout{** WARNING: IEEEtran.bst: No hyphenation pattern has been}%
\typeout{** loaded for the language `#1'. Using the pattern for}%
\typeout{** the default language instead.}%
\else
\language=\csname l@#1\endcsname
\fi
#2}}
\providecommand{\BIBdecl}{\relax}
\BIBdecl

\bibitem{urrea2021kalman}
C.~Urrea and R.~Agramonte, ``Kalman {F}ilter: {H}istorical {O}verview and {R}eview of its {U}se in {R}obotics 60 {Y}ears {A}fter {I}ts {C}reation,'' \emph{Journal of Sensors}, vol. 2021, p. 9674015, 2021.

\bibitem{huang2022survey}
Y.~Huang, J.~Du, Z.~Yang, Z.~Zhou, L.~Zhang, and H.~Chen, ``A {S}urvey on {T}rajectory-{P}rediction {M}ethods for {A}utonomous {D}riving,'' \emph{Transactions on Intelligent Vehicles}, vol.~7, pp. 652--674, 2022.

\bibitem{ding2023survey}
W.~Ding, C.~Xu, M.~Arief, H.~Lin, B.~Li, and D.~Zhao, ``A {S}urvey on {S}afety-{C}ritical {D}riving {S}cenario {G}eneration—{A} {M}ethodological {P}erspective,'' \emph{Transactions on Intelligent Transportation Systems}, vol.~24, pp. 6971--6988, 2023.

\bibitem{sungrobust}
M.~Sung, S.~H. Karumanchi, A.~Gahlawat, and N.~Hovakimyan, ``Robust {M}odel {B}ased {R}einforcement {L}earning using {$\mathcal{L}_1$} {A}daptive {C}ontrol,'' \emph{International Conference on Learning Representations}, 2024.

\bibitem{yong2016unified}
S.~Z. Yong, M.~Zhu, and E.~Frazzoli, ``A {U}nified {F}ilter for {S}imultaneous {I}nput and {S}tate {E}stimation of {L}inear {D}iscrete-{T}ime {S}tochastic {S}ystems,'' \emph{Automatica}, vol.~63, pp. 321--329, 2016.

\bibitem{kim2020data}
H.~Kim, P.~Guo, M.~Zhu, and P.~Liu, ``On {D}ata-{D}riven {Attack}-resilient {Gaussian} {Process} {Regression} for {Dynamic} {Systems},'' \emph{American Control Conference}, pp. 2981--2986, 2020.

\bibitem{wan2019attack}
W.~Wan, H.~Kim, N.~Hovakimyan, and P.~G. Voulgaris, ``Attack-{R}esilient {E}stimation for {L}inear {D}iscrete-{T}ime {S}tochastic {S}ystems with {I}nput and {S}tate {C}onstraints,'' \emph{Conference on Decision and Control}, pp. 5107--5112, 2019.

\bibitem{kaempchen2004imm}
N.~Kaempchen, K.~Weiss, M.~Schaefer, and K.~C. Dietmayer, ``{IMM} {O}bject {T}racking for {H}igh {D}ynamic {D}riving {M}aneuvers,'' \emph{Intelligent Vehicles Symposium}, pp. 825--830, 2004.

\bibitem{zhang2017method}
R.~Zhang, L.~Cao, S.~Bao, and J.~Tan, ``A method for connected vehicle trajectory prediction and collision warning algorithm based on {V2V} communication,'' \emph{International Journal of Crashworthiness}, vol.~22, pp. 15--25, 2017.

\bibitem{kuhn_wasserstein_2019}
D.~Kuhn, P.~M. Esfahani, V.~A. Nguyen, and S.~Shafieezadeh-Abadeh, ``Wasserstein {Distributionally} {Robust} {Optimization}: {Theory} and {Applications} in {Machine} {Learning},'' \emph{arXiv}, Aug. 2019.

\bibitem{wiesemann2014distributionally}
W.~Wiesemann, D.~Kuhn, and M.~Sim, ``Distributionally {R}obust {C}onvex {O}ptimization,'' \emph{Operations research}, vol.~62, pp. 1358--1376, 2014.

\bibitem{hakobyan2021wasserstein}
A.~Hakobyan and I.~Yang, ``Wasserstein {D}istributionally {R}obust {M}otion {C}ontrol for {C}ollision {A}voidance using {C}onditional {V}alue-at-{R}isk,'' \emph{Transactions on Robotics}, vol.~38, pp. 939--957, 2021.

\bibitem{shetty2023safeguarding}
A.~Shetty, A.~Dai, A.~Tzikas, and G.~Gao, ``Safeguarding {Learning}-{Based} {Planners} {Under} {Motion} and {Sensing} {Uncertainties} {Using} {Reachability} {Analysis},'' \emph{International Conference on Robotics and Automation}, pp. 7872--7878, 2023.

\bibitem{9682981}
A.~Hakobyan and I.~Yang, ``Toward {I}mproving the {D}istributional {R}obustness of {R}isk-{A}ware {C}ontrollers in {L}earning-{E}nabled {E}nvironments,'' \emph{Conference on Decision and Control}, pp. 6024--6031, 2021.

\bibitem{fernandez1995model}
E.~Fernandez-Camacho and C.~Bordons-Alba, \emph{{M}odel {P}redictive {C}ontrol in the {P}rocess {I}ndustry}.\hskip 1em plus 0.5em minus 0.4em\relax Springer, 1995.

\bibitem{fang2017ensemble}
H.~Fang, T.~Srivas, R.~A. de~Callafon, and M.~A. Haile, ``Ensemble-{B}ased {S}imultaneous {I}nput and {S}tate {E}stimation for {N}onlinear {D}ynamic {S}ystems with {A}pplication to {W}ildfire {D}ata {A}ssimilation,'' \emph{Control Engineering Practice}, vol.~63, pp. 104--115, 2017.

\bibitem{kailath2000linear}
T.~Kailath, A.~H. Sayed, and B.~Hassibi, \emph{Linear {E}stimation}.\hskip 1em plus 0.5em minus 0.4em\relax Prentice Hall, 2000.

\bibitem{mohajerin2018data}
P.~Mohajerin~Esfahani and D.~Kuhn, ``Data-driven {D}istributionally {R}obust {O}ptimization using the {W}asserstein {M}etric: {P}erformance {G}uarantees and {T}ractable {R}eformulations,'' \emph{Mathematical Programming}, vol. 171, pp. 115--166, 2018.

\bibitem{xin_bounds_2012}
L.~Xin and A.~Shapiro, ``Bounds for nested law invariant coherent risk measures,'' \emph{Operations Research Letters}, vol.~40, pp. 431--435, 2012.

\bibitem{benaroya2005probability}
H.~Benaroya, S.~M. Han, and M.~Nagurka, \emph{Probability {Models} in {Engineering} and {S}cience}.\hskip 1em plus 0.5em minus 0.4em\relax CRC press, 2005, vol. 192.

\bibitem{rahimian2019distributionally}
H.~Rahimian and S.~Mehrotra, ``Distributionally {R}obust {O}ptimization: {A} review,'' \emph{arXiv}, 2019.

\bibitem{ben2010soft}
A.~Ben-Tal, D.~Bertsimas, and D.~B. Brown, ``A {S}oft {R}obust {M}odel for {O}ptimization {U}nder {A}mbiguity,'' \emph{Operations research}, vol.~58, pp. 1220--1234, 2010.

\bibitem{de2000mahalanobis}
R.~De~Maesschalck, D.~Jouan-Rimbaud, and D.~L. Massart, ``The {M}ahalanobis {D}istance,'' \emph{Chemometrics and intelligent laboratory systems}, vol.~50, pp. 1--18, 2000.

\bibitem{gelbrich_formula_1990}
M.~Gelbrich, ``On a {Formula} for the {$L^2$} {Wasserstein} {Metric} between {Measures} on {Euclidean} and {Hilbert} {Spaces},'' \emph{Mathematische Nachrichten}, vol. 147, pp. 185--203, 1990.

\bibitem{yu_general_2009}
Y.-l. Yu, Y.~Li, D.~Schuurmans, and C.~Szepesvári, ``A {General} {Projection} {Property} for {Distribution} {Families},'' \emph{Advances in {Neural} {Information} {Processing} {Systems}}, vol.~22, 2009.

\bibitem{FORCESNLP}
A.~Zanelli, A.~Domahidi, J.~Jerez, and M.~Morari, ``{FORCES} {NLP}: an efficient implementation of interior-point methods for multistage nonlinear nonconvex programs,'' \emph{International Journal of Control}, pp. 1--17, 2017.

\bibitem{Dosovitskiy17}
A.~Dosovitskiy, G.~Ros, F.~Codevilla, A.~Lopez, and V.~Koltun, ``{CARLA}: {An} {O}pen {U}rban {D}riving {S}imulator,'' \emph{Annual Conference on Robot Learning}, pp. 1--16, 2017.

\end{thebibliography}

\end{document}